\theoremstyle{plain}
\newtheorem{theorem}{Theorem}
\newtheorem{definition}[theorem]{Definition}
\newtheorem{proposition}[theorem]{Proposition}
\newtheorem{corollary}[theorem]{Corollary}
\DeclareMathOperator*{\argmin}{arg\,min}
\DeclareMathOperator*{\argmax}{arg\,max}
\newcommand{\x}{\boldsymbol{x}}
\newcommand{\R}{\mathbb{R}}
\newcommand{\EJoint}{\mathop{\mathbb{E}}\limits_{(\x,y) \sim p(\x,y)}}
\newcommand{\Epos}{\mathop{\mathbb{E}}\limits_{\x \sim p(x|y=+1)}}
\newcommand{\Eneg}{\mathop{\mathbb{E}}\limits_{\x \sim p(x|y=-1)}}
\newcommand{\zerooneloss}{\ell_{01}}
\newcommand{\zeroonedloss}{\ell_{01c}}
\newcommand{\rejectionloss}{\ell_{01c}}
\newcommand{\rej}{\textregistered}
\newcommand{\vecone}{\mathds{1}}
\newcommand{\veceta}{\bm{\eta}}
\newcommand{\vecg}{\bm{g}}
\newcommand{\pyx}{p(y=+1|x)}
\newcommand\numberthis{\addtocounter{equation}{1}\tag{\theequation}}
\newcolumntype{L}{>{$}l<{$}}
\newcolumntype{C}{>{$}c<{$}}
\icmltitlerunning{Classification with Rejection Based on Cost-sensitive Classification}
\begin{document}

\twocolumn[
\icmltitle{Classification with Rejection Based on Cost-sensitive Classification}



\icmlsetsymbol{equal}{*}

\begin{icmlauthorlist}
\icmlauthor{Nontawat Charoenphakdee}{to,ri}
\icmlauthor{Zhenghang Cui}{to,ri}
\icmlauthor{Yivan Zhang}{to,ri}
\icmlauthor{Masashi Sugiyama}{ri,to}
\end{icmlauthorlist}

\icmlaffiliation{to}{The University of Tokyo, Tokyo, Japan}
\icmlaffiliation{ri}{RIKEN AIP, Tokyo, Japan}

\icmlcorrespondingauthor{Nontawat Charoenphakdee}{nontawat@ms.k.u-tokyo.ac.jp}

\icmlkeywords{Machine Learning, ICML}

\vskip 0.3in
]



\printAffiliationsAndNotice{}  

\begin{abstract}
The goal of classification with rejection is to avoid risky misclassification in error-critical applications such as medical diagnosis and product inspection.
In this paper, based on the relationship between classification with rejection and cost-sensitive classification, we propose a novel method of classification with rejection by learning an ensemble of cost-sensitive classifiers, which satisfies all the following properties:
(i) it can avoid estimating class-posterior probabilities, resulting in improved classification accuracy, 
(ii) it allows a flexible choice of losses including non-convex ones, 
(iii) it does not require complicated modifications when using different losses, 
(iv) it is applicable to both binary and multiclass cases, and 
(v) it is theoretically justifiable for any classification-calibrated loss.
Experimental results demonstrate the usefulness of our proposed approach in clean-labeled, noisy-labeled, and positive-unlabeled classification.
\end{abstract}

\section{Introduction}
In ordinary classification, a classifier learned from training data is expected to accurately predict a label of every possible test input in the input space.
However, when a particular test input is difficult to classify, forcing a classifier to always predict a label can lead to misclassification, causing serious troubles in risk-sensitive applications such as medical diagnosis, home robotics, and product inspection \citep{cortes2016boosting,geifman2017selective, ni2019calibration}.
To cope with this problem, classification with rejection was proposed as a learning framework to allow a classifier to abstain from making a prediction~\citep{chow1957, chow1970, bartlett2008classification, el2010foundations, geifman2017selective, cortes2016boosting,cortes2016learning, yuan2010,franc2019discriminative, pietraszek2005optimizing,gangrade2021selective}, so that we can prevent misclassification in critical applications.

A well-known framework for classification with rejection that has been studied extensively is called the cost-based framework~\citep{chow1970, bartlett2008classification,  yuan2010,cortes2016boosting,cortes2016learning, franc2019discriminative, ni2019calibration}.
In this setting, we set a pre-defined rejection cost to be less than the misclassification cost. As a result, a classifier trained in this framework prefers to reject than making a risky prediction, where there currently exist two main approaches as the following. 

The first approach is called the \emph{confidence-based approach}, where we train a classifier then use an output of the classifier as a confidence score~\citep{ bartlett2008classification,  grandvalet2009support,  herbei2006classification, yuan2010, ramaswamy2018consistent, ni2019calibration}. 
In this approach, we manually set a confidence threshold as a criterion to refrain from making a prediction, if the confidence score of a test input is lower than the threshold. 
Most confidence-based methods rely on a loss that can estimate class-posterior probabilities~\citep{yuan2010, reid2010,ni2019calibration}, which can be difficult to estimate especially when using deep neural networks~\citep{guo2017calibration}.
Although there are some exceptions that can avoid estimating class-posterior probabilities, most of them are only applicable to binary classification~\citep{bartlett2008classification,grandvalet2009support, manwani2015double}.  

The second approach is called the \emph{classifier-rejector} approach, where we simultaneously train a classifier and a rejector~\citep{cortes2016boosting,cortes2016learning, ni2019calibration}. 
It is known that this approach has theoretical justification in the binary case only for the exponential and hinge-based losses~\citep{cortes2016boosting, cortes2016learning}.
This is because the proof technique highly relies on the function form of the loss~\citep{cortes2016boosting,cortes2016learning}.
In the multiclass case, ~\citet{ni2019calibration} argued that this approach is not suitable both theoretically and experimentally since the multiclass extension of~\citet{cortes2016learning} is not calibrated and the confidence-based softmax cross-entropy loss can outperform this approach in practice.

The goal of this paper is to develop an alternative approach to classification with rejection that achieves the following four design goals. 
First, it can avoid estimating class-posterior probabilities, since this often yields degradation of classification performance. 
Second, the choice of losses is flexible and does not require complicated modifications when using different losses, which allows a wider range of applications.
Third, it is applicable to both binary and multiclass cases. 
Fourth, it can be theoretically justified. 
In this paper, we show that this goal can be achieved by bridging the theory of cost-sensitive classification~\citep{elkan2001foundations,scott2012calibrated,steinwart2007compare} and classification with rejection.
The key observation that allows us to connect the two problems is based on the fact that one can mimic the Bayes optimal solution of classification rejection by only knowing $\argmax_y p(y|x)$ and whether $\max_y p(y|x) > 1-c$, where $c$ is the rejection cost.
Based on this observation, we propose the \emph{cost-sensitive approach}, which calibration can be guaranteed for \emph{any classification-calibrated loss}~\citep{zhang2004statistical,bartlett2006}.
Classification-calibration is known to be a minimum requirement for a loss in ordinary classification~\citep{bartlett2006}. 
This suggests that the loss choices of our proposed approach are \emph{as flexible as that of ordinary classification}. 

To emphasize the importance of having a flexible loss choice, we explore the usage of our approach for classification from positive and unlabeled data (PU-classification)~\citep{du2014, du2015convex, kiryo2017} and classification from noisy labels~\citep{oldccn, ghosh2015making}.
Our experimental results show that a family of symmetric losses, which are the losses that cannot estimate class-posterior probabilities~\citep{charoenphakdee2019symmetric}, can be advantageous in these settings.
We also provide experimental results of clean-labeled classification with rejection to illustrate the effectiveness of the cost-sensitive approach.

\section{Preliminaries}
In this section, we introduce the problem setting of classification with rejection. Then, we review cost-sensitive binary classification, which will be essential for deriving the proposed cost-sensitive approach for classification with rejection.

\subsection{Classification with Rejection}
Our problem setting follows the standard cost-based framework classification with rejection~\citep{chow1970,cortes2016learning,ni2019calibration}.
Let $\mathcal{X}$ be an input space and $\mathcal{Y}=\{1, \ldots, K\}$ be an output space, where $K$ denotes the number of classes. 
Note that we adopt a conventional notation $\mathcal{Y}= \{-1, +1 \}$ when considering binary classification~\citep{bartlett2006}.
In this problem, we are given the training input-output pairs $\{\x_i, y_i\}_{i=1}^{n}$ drawn i.i.d.~from an unknown probability distribution with density $p(\x, y)$. 
A classification rule of learning with rejection is $f\colon \mathcal{X} \to \{1, \ldots, K, \textrm{\rej} \}$, where $\text{\rej}$ denotes rejection. 
Let $c \in (0, 0.5)$ be the rejection cost. Unlike ordinary classification, where the zero-one loss $\zerooneloss(f(\x),y) = \vecone_{[f(\x) \neq y]}$\footnote{$\vecone_{[\cdot]}$ denotes an indicator function.} is the performance measure, we are interested in an extension of $\zerooneloss$, which is called the zero-one-$c$ loss $\rejectionloss$ defined as follows~\citep{ni2019calibration}:
 \[ 
\rejectionloss(f(\x), y) =
\begin{cases}
c & f(\x) =  \textrm{\rej} \text{,}\\
\zerooneloss(f(\x),y) & \text{otherwise.}
\end{cases}
\]
The goal is to find a classification rule $f$ that minimizes the expected risk with respect to $\rejectionloss$, i.e., 
\begin{align} \label{eq:0-1-c-risk}
    R^{\zeroonedloss}(f)=\EJoint[\rejectionloss(f(\x), y)].
\end{align}

In classification with rejection, a classification rule $f$ is allowed to refrain from making a prediction and will receive a fixed rejection loss $c$. 
In this paper, following most existing studies~\citep{cortes2016boosting,cortes2016learning,ramaswamy2018consistent,ni2019calibration}, we consider the case where $c<0.5$. 
Intuitively, this case implies that it is strictly better to reject if a classifier has less than half a chance to be correct. 
Thus, the case where $c<0.5$ is suitable if the goal is to avoid harmful misclassification.
We refer the readers to \citet{ramaswamy2018consistent} for more discussion on the case where $c\geq0.5$ and how it is fundamentally different from the case where $c<0.5$.
Next, let us define  $\veceta(\x) = [\eta_1(\x), \ldots, \eta_K(\x)]^\top$, where $\eta_y(x)=p(y|\x)$ denotes the class-posterior probability of a class $y$.
The optimal solution for classification with rejection $f^*=\argmin_f\,R^{\zeroonedloss}(f)$ known as Chow's rule~\citep{chow1970} can be expressed as follows:
\begin{definition}[Chow's rule~\citep{chow1970}]\label{def:chow-mul} 
\rm{The optimal solution of multiclass classification with rejection $f^*=\argmin_f\,R^{\zeroonedloss}(f)$ can be expressed as
\begin{equation*}
f^*(\x)=
\begin{cases}
\textrm{\rej} & \max_y \eta_y(\x) \leq 1-c \text{,}\\
\argmax_y \eta_y(\x) & \text{otherwise.}
\end{cases}
\end{equation*}
}
\end{definition}
Chow's rule suggests that classification with rejection is solved if we have the knowledge of $\veceta(\x)$. 
Therefore, one approach is to estimate $\veceta(\x)$ from training examples. 
This method is in a family of the confidence-based approach, which has been extensively studied in both the binary~\citep{yuan2010} and multiclass cases~\citep{ni2019calibration}. 
Figure~\ref{fig:confbased} illustrates the confidence-based approach.

\begin{figure*}
\centering
\includegraphics[width=0.8\textwidth]{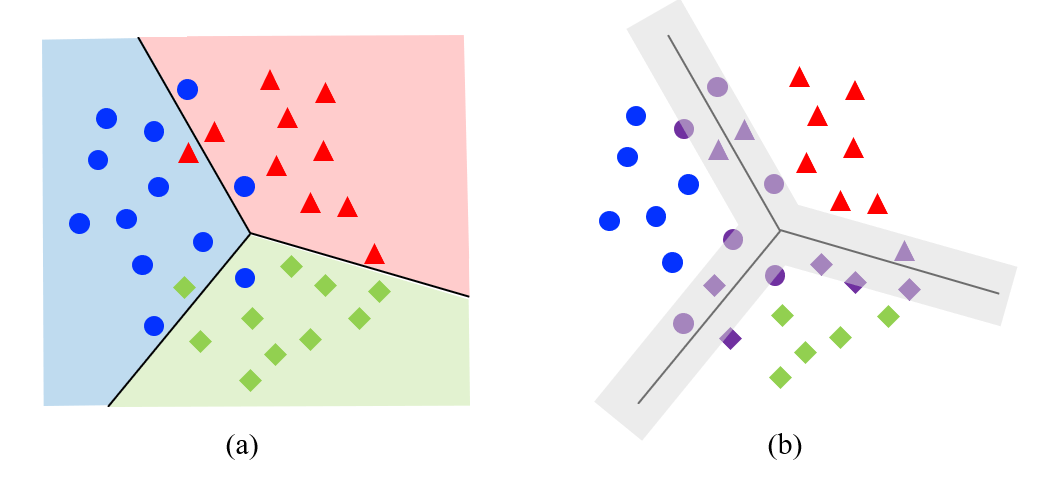}
\caption{\label{fig:confbased} 
Illustration of the confidence-based approach.
Figure (a) denotes a prediction function. 
The rejector in figure (b) has a rejection region spreads from the decision boundary of the prediction function. The width of the rejection region depends on the choice of the rejection threshold parameter.
Data points in purple are rejected.
}
\vspace{-0.1cm}
\end{figure*}

\subsection{Cost-sensitive Binary Classification}
Consider binary classification where $y \in \{-1, +1 \}$. 
In ordinary classification, the false positive and false negative costs are treated equally. 
On the other hand, in cost-sensitive classification, the false positive and false negative costs are generally unequal~\citep{elkan2001foundations,saerens2002adjusting, scott2012calibrated}.

Without loss of generality, we define $\alpha \in (0,1) $ to be the false positive cost and $1-\alpha$ to be the false negative cost. 
Then, the expected cost-sensitive risk can be expressed as
\begin{align*}
    R^{\zerooneloss}_\alpha(f)&=(1-\alpha)\pi\Epos[\zerooneloss(f(\x), +1)] \\ 
    &\quad +\alpha(1-\pi)\Eneg[\zerooneloss(f(\x), -1)],
\end{align*}
where $\pi=p(y=+1)$ denotes the class prior.

It is known that the Bayes optimal cost-sensitive binary classifier can be expressed as follows:

\begin{definition}[\citet{scott2012calibrated}]\label{def:scott-optimal} 
\rm{The optimal cost-sensitive classifier $f_{\alpha}^*=\argmin_f\R_\alpha(f)$ can be expressed as
\begin{equation*}
   f_{\alpha}^*(\x)=
\begin{cases}
\mathrm{+1} & p(y=+1|\x)>\alpha \text{,} \\
\mathrm{-1} & \text{otherwise.}
\end{cases}
\end{equation*}
} 
\end{definition}
Note that when $\alpha=0.5$, the Bayes optimal solution $f_{0.5}^*(\x)$ coincides with that of ordinary binary classification. 
Moreover, when $\alpha$ is known, cost-sensitive binary classification is solved if we have access to the class-posterior probability $p(y=+1|\x)$.

\section{Cost-sensitive Approach}
In this section, we propose a cost-sensitive approach for classification with rejection. We begin by describing our motivation and analyzing the behavior of the Bayes optimal solution of classification with rejection.
Then, we show that this problem can be solved by simultaneously solving multiple cost-sensitive classification problems.
\subsection{Motivation}
As suggested by Chow's rule~\citep{chow1970},  classification with rejection can be solved by estimating the class-posterior probabilities.  
However, an important question arises as: \emph{Is class-posterior probability estimation indispensable for solving classification with rejection?}  
This question is fundamentally motivated by Vapnik's principle~\citep{vapnik1998statistical}, which suggests not to solve a more general problem as an intermediate step when solving a target problem if we are given a restricted amount of information.

In our context, the general problem is class-posterior probability estimation. 
In fact, knowing class-posterior probabilities can also solve many other problems~\citep{qin1998inferences,bickel2007discriminative,sugiyama2012density,dembczynski2013optimizing, koyejo2014consistent}. 
However, many of such problems are also known to be solvable without estimating the class-posterior probabilities~\citep{ulsif,bao2019calibrated}. 
Note that class-posterior probability estimation can be unreliable when the model is misspecified~\citep{begg1990consequences,heagerty2001misspecified} 
or highly flexible~\citep{guo2017calibration, hein2019relu}. 

To find a more direct solution for classification with rejection, we seek for a general approach that it may not be able to estimate class-posterior probabilities, but its optimal solution coincides with the optimal Chow's rule~\citep{chow1970}.
Although the idea of directly solving classification with rejection without class-posterior estimation itself is not novel,
most existing methods are only applicable to binary classification with rejection~\citep{bartlett2008classification, grandvalet2009support, manwani2015double, cortes2016learning, cortes2016boosting}, or focus on specific types of losses~\citep{ramaswamy2018consistent}.
For the multiclass case,~\citet{zhang2018reject} proposed to modify a loss by bending it to be steeper (for the hinge loss) and positively unbounded, but there exist hyperparameters to be tuned such as the rejection threshold and the bending slope.
Also,~\citet{mozannar2020consistent} recently proposed a classifier-rejector approach by augmenting a rejection class in the model's prediction, but their loss choice is limited to the modified cross-entropy loss.  
More discussion on related work is provided in Appendix~\ref{app:related_work}.

\begin{figure}
\centering
\includegraphics[width=\columnwidth]{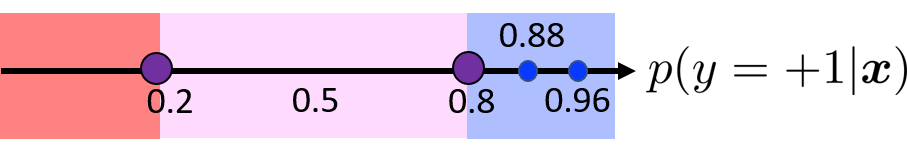}
\caption{\label{fig:chow-intuition} 
Illustration of Chow's rule in binary classification with rejection and the unnecessity of knowing the class-posterior probability to solve this problem. 
If the rejection cost $c=0.2$, as long as we know $p(y=1|x) > 0.8$, knowing the exact value of the class-posterior probability does not change our final decision to predict a positive label. 
}
\vspace{-2mm}
\end{figure}

\subsection{A Closer Look at Chow's Rule}\label{sec:sub-closer-look}

Here, we analyze the behavior of Chow's rule~\citep{chow1970}. 
We discuss the minimal knowledge 
required for a classification rule to mimic Chow's rule, which illustrates 
that the class-posterior probabilities need not to be known.
For simplicity, we begin by considering binary classification with rejection.

In binary classification with rejection, Chow's rule in Definition~\ref{def:chow-mul} can be expressed as
\begin{equation}\label{eq:chow-bin}
f^*(\x)=
\begin{cases}
\mathrm{1} & \pyx>1-c \text{,} \\
\textrm{\rej} & c \leq \pyx \leq 1-c \text{,}\\
\mathrm{-1} & \pyx < c \text{.}
\end{cases}
\end{equation}
To solve binary classification with rejection, there are only three conditions to verify, which are $\pyx > 1-c$, $\pyx <c$, and $\pyx \in [c,1-c]$. 
We can see that if we know $\pyx > 1-c$, we do not need to know the exact value of $\pyx$ to predict the label as positive. 
For example, if $c=0.2$, knowing $p(y=+1|x) > 0.8$ is already sufficient to predict a label, i.e., knowing whether $p(y=+1|x) > 0.88$ or $p(y=+1|x) > 0.96$ does not change the decision of Chow's rule.
Figure~\ref{fig:chow-intuition} illustrates this fact, which is the key intuition why it is possible to develop a method that can avoid estimating the class-posterior probabilities for solving this problem.

\subsection{Binary Classification with Rejection Based on Cost-sensitive Classification}\label{sec:sub-proposed-binary}
Here, we show that by solving two cost-sensitive binary classification problems, binary classification with rejection can be solved.
The following proposition shows the relationship between the optimal solutions of cost-sensitive binary classification and that of binary classification with rejection.
\begin{proposition}
\label{prop:chow-scott} 
In binary classification with rejection, Chow's rule can be expressed as
\begin{equation}
f^*(\x)=
\begin{cases}
\mathrm{1} & f_{1-c}^*(\x) =1 \text{,} \\
\mathrm{-1} & f_{c}^*(\x) = -1 \text{,} \\
\textrm{\rej} & \mathrm{otherwise}. \\ 
\end{cases}
\end{equation}
\end{proposition}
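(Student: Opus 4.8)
The plan is to verify the equivalence case-by-case, translating each branch of the claimed rule into the class-posterior condition that Definition~\ref{def:scott-optimal} attaches to $f_{1-c}^*$ and $f_c^*$, and then matching it against the binary form of Chow's rule in Eq.~\eqref{eq:chow-bin}. By Definition~\ref{def:scott-optimal}, we have $f_{1-c}^*(\x)=1$ iff $\pyx > 1-c$, and $f_c^*(\x)=-1$ iff $\pyx \leq c$ (equivalently $\pyx < c$ up to a measure-zero boundary, which does not affect the risk). So the first branch of the proposed rule fires exactly on $\{\pyx > 1-c\}$ and the second branch exactly on $\{\pyx \leq c\}$; on the complement, $\pyx \in (c, 1-c]$, the rule outputs $\rej$.

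First I would check that the three conditions defining the branches are mutually exclusive and exhaustive, so that the piecewise rule is well defined: since $c \in (0,0.5)$ we have $c < 1-c$, hence the sets $\{\pyx > 1-c\}$, $\{\pyx \leq c\}$, and $\{c < \pyx \leq 1-c\}$ partition $\mathcal{X}$; in particular $f_{1-c}^*(\x)=1$ and $f_c^*(\x)=-1$ cannot hold simultaneously, so there is no ambiguity in the ``otherwise'' clause. Next I would substitute these equivalences into the proposed rule: it predicts $1$ on $\{\pyx > 1-c\}$, $-1$ on $\{\pyx \leq c\}$, and $\rej$ on $\{c < \pyx \leq 1-c\}$. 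Comparing termwise with Eq.~\eqref{eq:chow-bin}, which gives $1$ on $\{\pyx > 1-c\}$, $\rej$ on $\{c \leq \pyx \leq 1-c\}$, and $-1$ on $\{\pyx < c\}$, the two rules agree everywhere except possibly on the measure-zero boundary sets $\{\pyx = c\}$ and the endpoint conventions, which do not change the expected risk $R^{\zeroonedloss}$. Hence the proposed rule attains $\min_f R^{\zeroonedloss}(f)$ and is a valid expression for $f^*$.

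The only real subtlety — and the step I would be most careful about — is the handling of the boundary set $\{\pyx = c\}$ (and, symmetrically, the tie points for the argmax in the multiclass statement of Definition~\ref{def:chow-mul}): Definition~\ref{def:scott-optimal} breaks the tie at $\pyx = \alpha$ toward $-1$, whereas Chow's rule as written in Eq.~\eqref{eq:chow-bin} places $\pyx = c$ in the rejection region. I would resolve this exactly as is standard for Bayes-optimality statements, by noting that at any $\x$ with $\pyx = c$ the conditional $\zeroonedloss$-risks of predicting $-1$ and of rejecting are both equal to $c$, so either choice is optimal and the discrepancy contributes nothing to $R^{\zeroonedloss}(f^*)$; thus Proposition~\ref{prop:chow-scott} holds as an identity between optimal rules up to such $p$-null reassignments, which is all that is needed. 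Everything else is a direct substitution, so no further obstacles are expected.
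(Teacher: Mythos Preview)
Your proof is correct and follows essentially the same approach as the paper: both translate the conditions $f_{1-c}^*(\x)=1$ and $f_c^*(\x)=-1$ into class-posterior inequalities via Definition~\ref{def:scott-optimal} and match them against the binary Chow rule in Eq.~\eqref{eq:chow-bin}. Your treatment is in fact more careful than the paper's, which does not discuss the boundary set $\{\pyx=c\}$ at all; your observation that the conditional $\zeroonedloss$-risk of predicting $-1$ and of rejecting are both equal to $c$ there is the right way to close that gap.
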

\begin{proof}
We assert that if we can verify the following two conditions:
\begin{align}
    \pyx > 1-c \label{ineq:binary1},\\
    \pyx > c  \label{ineq:binary2},
\end{align}
then binary classification with rejection is solved.
Based on Chow's rule~\eqref{eq:chow-bin},
if Ineq.~\eqref{ineq:binary1} holds,  Ineq.~\eqref{ineq:binary2} must also hold since $c<0.5$. 
Then we should predict a positive label. 
On the other hand, we should predict a negative label if Ineq.~\eqref{ineq:binary2} does not hold.
Next, if  Ineq.~\eqref{ineq:binary2} holds but Ineq.~\eqref{ineq:binary1} does not hold, we should reject $\x$.
As a result, based on Definition~\ref{def:scott-optimal}, knowing $f_{1-c}^*(\x)$ and $f_{c}^*(\x)$ is sufficient to verify Ineqs.~\eqref{ineq:binary1} and Ineq.~\eqref{ineq:binary2}. 
This concludes the proof.
\end{proof}
Proposition~\ref{prop:chow-scott} suggests that by solving two binary cost-sensitive classification with $\alpha=c$ and $\alpha=1-c$ to obtain $f_{c}^*(\x)$ and $f_{1-c}^*(\x)$, binary classification with rejection can be solved.



\begin{table*}
\centering
\caption{Classification-calibrated binary surrogate losses and their properties including the convexity, symmetricity (i.e., $\phi(z)+\phi(-z)$ is a constant), and its capability to estimate the class posterior probability $\eta_1(\x)$ in binary classification.
The column ``confidence-based'' indicates that a loss is applicable to the confidence-based approach and it satisfies all conditions required in order to use the previous work to derive its excess risk bound~\citep{yuan2010,ni2019calibration}. 
On the other hand, our cost-sensitive approach can guarantee the existence of the excess risk bound as long as a loss $\phi$ is classification-calibrated~\citep{bartlett2006}.
\label{table:cc-binary-loss}}

\begin{tabular}{|C|C|C|C|C|C|C|C|}
\hline
\text{Loss name} & \phi(z) &  \text{Convex} & \text{Symmetric} & \text{Estimating $\eta_1(\x)$}& \text{Confidence-based}\\ \hline
\text{Squared} & (1-z)^{2} &  \checkmark &\times&\checkmark&\checkmark\\ 
\text{Squared hinge} & \max(0, 1-z)^{2} &  \checkmark &\times&\checkmark&\checkmark \\ 
\text{Exponential} & \exp(-z)&  \checkmark &\times&\checkmark &\checkmark\\ 
\text{Logistic} & \mathrm{log}(1+\exp(-z)) &  \checkmark &\times &\checkmark&\checkmark\\ 
\text{Hinge} & \max(0, 1-z) &  \checkmark &\times &  \times&\times\\ 
\text{Savage}  & \left[(1+\exp(2z))^{2}\right]^{-1} &\times &  \times &\checkmark&\times\\ 
\text{Tangent} & (2\mathrm{arctan}(z)-1)^{2}  &\times &  \times & \checkmark&\times\\ 
\text{Ramp}& \mathrm{max}(0, \mathrm{min}(1, 0.5-0.5z))& \times &  \checkmark & \times &\times \\ 
 \text{Sigmoid} & \left[1+\exp(z)\right]^{-1} & \times &  \checkmark &  \times&\times\\ 
     \hline
\end{tabular}
\end{table*}
\subsection{Multiclass Extension}\label{sec:sub-proposed-multi}
Here, we show that our result in Section~\ref{sec:sub-proposed-binary} can be naturally extended to the multiclass case.
More specifically, we show that multiclass classification with rejection can be solved by learning an ensemble of $K$ binary cost-sensitive classifiers.

Let us define the Bayes optimal solution for one-versus-rest cost-sensitive binary classifier $f^{*,y}_{\alpha}$, where $y$ is the positive class and $y' \in \mathcal{Y}, y' \neq y$ are the negative classes:
\begin{equation*}
   f^{*,y}_{\alpha}(\x)=
\begin{cases}
\mathrm{+1} & \eta_y(\x) >\alpha \text{,} \\
\mathrm{-1} & \text{otherwise.}
\end{cases}
\end{equation*}

Then, we obtain the following proposition (its proof can be found in Appendix~\ref{proof:prop-chow-mult}).
\begin{proposition}
\label{prop:chow-scott-multi} 
Chow's rule in multiclass classification with rejection can be expressed as
\begin{equation*}
f^*(\x)=
\begin{cases}
\textrm{\rej} & \max_y f^{*,y}_{1-c}(\x) = -1 \text{,}\\
\argmax_y f^{*,y}_{1-c}(\x) & \rm{otherwise.}
\end{cases}
\end{equation*}
\end{proposition}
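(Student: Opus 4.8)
The plan is to mirror the structure of the proof of Proposition~\ref{prop:chow-scott}, the new ingredient being that the threshold $\alpha = 1-c$ exceeds $1/2$, which forces uniqueness of the maximizer. First I would record the elementary fact that, since $c \in (0, 0.5)$, we have $1-c > 1/2$; combined with $\sum_{y \in \mathcal{Y}} \eta_y(\x) = 1$, this implies that at most one class $y$ can satisfy $\eta_y(\x) > 1-c$. Consequently $f^{*,y}_{1-c}(\x) = +1$ for at most one $y$, so both $\max_y f^{*,y}_{1-c}(\x)$ and $\argmax_y f^{*,y}_{1-c}(\x)$ appearing in the claimed expression are unambiguous, and no tie-breaking convention is needed.

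Next I would split into the two cases that define Chow's rule (Definition~\ref{def:chow-mul}). If $\max_y \eta_y(\x) \le 1-c$, then no class satisfies $\eta_y(\x) > 1-c$, hence $f^{*,y}_{1-c}(\x) = -1$ for every $y$, so $\max_y f^{*,y}_{1-c}(\x) = -1$ and the claimed rule rejects --- exactly what Chow's rule prescribes. If instead $\max_y \eta_y(\x) > 1-c$, let $y^\star$ attain the maximum; since $\eta_{y^\star}(\x) > 1-c > 1/2$, this maximizer is unique and $\argmax_y \eta_y(\x) = y^\star$. Then $f^{*,y^\star}_{1-c}(\x) = +1$ while $f^{*,y}_{1-c}(\x) = -1$ for all $y \neq y^\star$, so $\max_y f^{*,y}_{1-c}(\x) = +1 \neq -1$ and $\argmax_y f^{*,y}_{1-c}(\x) = y^\star = \argmax_y \eta_y(\x)$, again matching Chow's rule. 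Since the two cases are exhaustive and in each the claimed rule agrees with $f^*$, this proves the proposition.

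There is essentially no hard step: the entire content is the observation that a threshold strictly above $1/2$ can be exceeded by at most one class-posterior probability. The one place I would be most explicit --- and the only point needing care --- is the well-definedness of $\argmax_y f^{*,y}_{1-c}(\x)$ and of $\argmax_y \eta_y(\x)$, since the absence of ties is a consequence of that same observation rather than of any arbitrary convention. It is also worth noting that the regime $c < 0.5$ is exactly what makes the reduction valid; for $c \ge 0.5$ one would have $1-c \le 1/2$, several classes could simultaneously be labeled $+1$, and the argument would break down.
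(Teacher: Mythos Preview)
Your proposal is correct and follows essentially the same approach as the paper's proof: both hinge on the observation that $1-c > 1/2$ forces at most one class to satisfy $\eta_y(\x) > 1-c$, and both then verify that the cost-sensitive rule agrees with Chow's rule in the reject and non-reject cases. Your presentation is slightly more direct (a straight case split on Chow's rule rather than the paper's framing of ``what information suffices to mimic Chow's rule''), but the substance is identical.
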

Proposition~\ref{prop:chow-scott-multi} suggests that by learning cost-sensitive classifiers $f^{*,y}_{1-c}$ for $y \in \mathcal{Y}$, it is possible to obtain Chow's rule without estimating the class-posterior probabilities.
Note that when $c<0.5$, there exists at most one $y'\in\mathcal{Y}$ such that $f^{*,y'}_{1-c}(\x) = 1$. 
This is because it implies that $\eta_{y'}(x) > 1-c$, which is larger than $0.5$. 

\noindent \textbf{Related work:} The similar idea of constructing cost-sensitive classifiers for solving classification with rejection has also been recently explored in the bounded-improvement framework by~\citet{gangrade2021selective}. 
Unlike the cost-based framework considered in this paper, where our goal is to minimize the expected risk with respect to $\rejectionloss$ in Eq.~\eqref{eq:0-1-c-risk}, the goal of the bounded-improvement framework is to learn a classifier that minimizes the number of rejections (i.e., maximizing coverage) while achieving at least the pre-defined accuracy on non-rejected data~\citep{pietraszek2005optimizing,el2010foundations, geifman2017selective,liu2019deep,franc2019discriminative}.
~\citet{gangrade2021selective} proposed to solve classification with rejection under the bounded-improvement framework by solving multiple one-sided classification problems. 
Then, they further relaxed one-sided classification problems to cost-sensitive classification problems.
Since the optimal solution in the bounded-improvement framework is not necessarily Chow's rule, their final learning objective is different from ours.
Please see~\citet{gangrade2021selective} for more details. 


\section{A Surrogate Loss for the Cost-sensitive Approach}\label{sec:surrogate}
In this section, we propose a surrogate loss for the cost-sensitive approach for classification with rejection.

It is known that given training data, directly minimizing the empirical risk with respect to $\zeroonedloss$ is computationally infeasible~\citep{bartlett2008classification,ramaswamy2018consistent}.
Therefore, many surrogate losses have been proposed to learn a classifier with rejection in practice~\citep{yuan2010,cortes2016learning,ni2019calibration}. Here, we propose the cost-sensitive surrogate loss for classification with rejection.
Let $\vecg(\x) = [g_1(\x), \ldots, g_K(\x)]^\top$, where $g_y(x)\colon \mathcal{X} \to \R$ is the score function for a class $y$. 
Let $\phi\colon\R\to\R$ be a binary margin surrogate loss.
A margin loss is a 
class of loss functions for binary classification that takes only one real-valued argument~\citep{bartlett2006,reid2010}.
Table~\ref{table:cc-binary-loss} illustrates examples of binary margin surrogate losses.
With a choice of 
$\phi$, we can define our proposed cost-sensitive surrogate loss as follows.
\begin{definition}
\label{def:proposed-surr}
\rm{Given a binary margin surrogate loss~$\phi$ and a pre-defined rejection cost $c$, \emph{the cost-sensitive surrogate loss} for classification with rejection is defined as}
\begin{align*}
    \mathcal{L}^{c,\phi}_\mathrm{CS}(\vecg; \x, y) = c \phi \big( g_{y}(\x)\big) + (1-c) \sum_{y' \neq y} \phi \big( -g_{y'}(\x) \big).
\end{align*}
\end{definition}

\begin{figure*}
\centering
\includegraphics[width=0.8\textwidth]{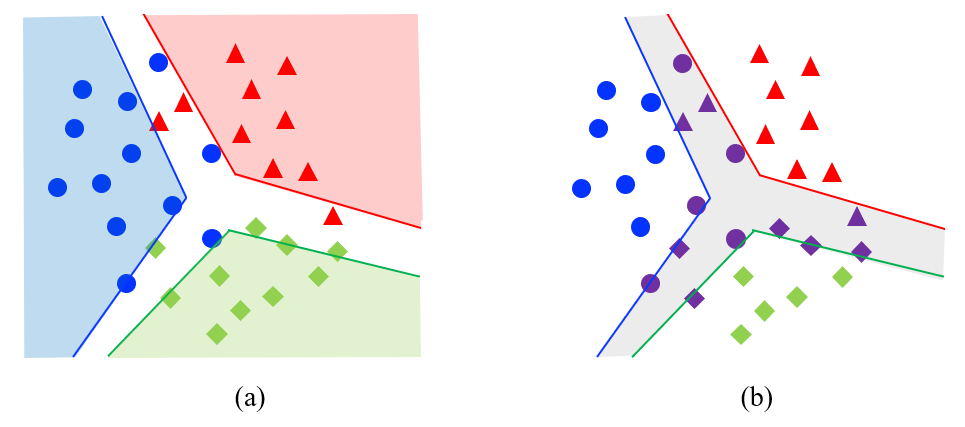}
\caption{\label{fig:cs-approach} 
Illustration of the cost-sensitive approach.
Figure (a) denotes a prediction function.
Unlike the confidence-based approach~(Figure~\ref{fig:confbased}), the prediction function is not designed to predict all data points in the space and the rejection region does not spread from the decision boundary. 
The decision boundary is based on an ensemble of cost-sensitive classifiers for blue, red, and green classes, respectively.
Then, the rejector in figure (b) is constructed based on the rejection rule in Cond.~\eqref{eq:amb-reject} by aggregating the prediction result of each cost-sensitive classifier. 
Data points in purple are rejected.
}
\vspace{-1.5mm}
\end{figure*}

Next, following the empirical risk minimization framework~\citep{vapnik1998statistical}, a learning objective function can be straightforwardly obtained as follows:
\begin{equation}
\label{eq:erm}
    \hat{R}^{\mathcal{L}^{c,\phi}_\mathrm{CS}}(\vecg) = \frac{1}{n}\sum_{i=1}^{n} \mathcal{L}^{c,\phi}_\mathrm{CS}(\vecg; \x_i, y_i). 
\end{equation}
Note that regularization can also be applied in practice to avoid overfitting. 
Moreover, we want to emphasize that although it is theoretically suggested to learn an ensemble of classifiers to solve classification with rejection, in practice, by using linear-in-parameter models or neural networks with $K$-dimensional vectorial outputs, we can conveniently learn all $K$ cost-sensitive binary classifiers together at once, which is $\vecg$. 

After learning $\vecg$ by minimizing Eq.~\eqref{eq:erm}, we have to design how to reject $\x$.
Following the optimal rejection rule in our Proposition~\ref{prop:chow-scott-multi}, i.e., $\max_y f^{*,y}_{1-c}(\x) = -1$
, we can directly obtain the following rejection rule:
\begin{equation}
\label{eq:amb-reject}
    \max_y g_y(\x) \leq 0 \text{.}
\end{equation}
Intuitively, Cond.~\eqref{eq:amb-reject} suggests to reject $\x$ if all $g_y(\x)$ have low prediction confidence. 
One may interpret this type of rejection as \emph{distance rejection}~\citep{dubuisson1993statistical}, where the rejection is made when $\vecg$ is uncertain whether $\x$ belongs to any of the known classes.
Note that this does not necessarily imply that $\x$ belongs to an unknown class, e.g., $\x$ may be located close to the decision boundary, causing none of $g_y(\x)$ to be confident enough to predict a class $y$.

Next, we also consider the following rejection rule:
\begin{align}
\label{eq:morethanone-reject}
        \exists y,y' \, \text{s.t.} \, y\neq y'  \text{and} \, g_y(\x), g_{y'}(\x) > 0.
\end{align}
Cond.~\eqref{eq:morethanone-reject} suggests to reject $\x$ because there exists a prediction conflict among at least two binary classifiers, i.e., $ g_y(\x)$ suggests to predict a class $y$ but $g_{y'}(\x)$ suggests to predict another class $y'$.
Note that if we succeed to obtain the optimal classifier $\vecg^*$, this condition is \emph{impossible} to be satisfied.
Recall that in Section~\ref{sec:sub-proposed-multi}, for $\vecg^*$, at most one $g^*_y(\x)$ can be more than zero since it implies $\eta_y > 1-c > 0.5$. 
Nevertheless, Cond.~\eqref{eq:morethanone-reject} may hold in practice due to empirical estimation.
This rejection condition can be interpreted as \emph{ambiguity rejection}~\citep{dubuisson1993statistical}, where the rejection is made when $\vecg$ interprets $\x$ to be associated with more than one class.
More discussion on the proposed rejection conditions is provided in Appendix.~\ref{sec:add-exp}.

To sum up, we employ the following classification rule for the cost-sensitive approach:
\begin{align}
\label{eq:final-rule}
f(\x; \vecg)=
\begin{cases}
\textrm{\rej} & \text{Conds.}~\eqref{eq:amb-reject} \, \text{or} \, \eqref{eq:morethanone-reject} \text{,}\\ 
\argmax_y g_y(\x) & \rm{otherwise.}
\end{cases}
\end{align}

Figure~\ref{fig:cs-approach} illustrates the cost-sensitive approach.
It is worth mentioning that our rejection condition is different from that of~\citet{zhang2018reject}. 
In their rejection rule, an input $\x$ is rejected if all binary classifiers' outputs are close to zero.
In our case, Cond.~\eqref{eq:amb-reject} rejects~$\x$ as long as all $g_y(\x)$'s are negative, e.g., $\x$ is also rejected if all prediction outputs are much smaller than zero. 
Also, their method can predict a set of labels when at least two classifiers predict positively, which is different from our problem setting, where it is only allowed to predict one label or refrain from making a prediction. 
\section{Theoretical Analysis}
In this section, we show that the classification rule $f(\x;\vecg)$ in Eq.~\eqref{eq:final-rule} can achieve Chow's rule and also provide excess risk bounds. 


\subsection{Calibration}
We begin by introducing the well-known notion of classification-calibrated loss in binary classification.
Let us define the pointwise conditional surrogate risk for a fixed input $\x$ in binary classification with its class-posterior probability of a positive class $\eta_1$: 
\begin{align}
    C^{\phi}_{\eta_1}(v) = \eta_1\phi(v) + (1-\eta_1)\phi(-v),
\end{align}
for $v\in\R$. Next, a classification-calibrated loss can be defined as follows.

\begin{definition}[\citet{bartlett2006}]
We say a loss $\phi$ is classification-calibrated if for any $\eta_1 \neq \frac{1}{2}$, we have
\begin{align*}
    \inf_{v(2\eta_1-1) \leq 0} C^{\phi}_{\eta_1}(v) > \inf_{v} C^{\phi}_{\eta_1}(v).
\end{align*}
\end{definition}
Intuitively, classification-calibration ensures that minimizing a loss $\phi$ can give the Bayes optimal binary classifier $\text{sign}(2\eta_1-1)$.
It is known that a convex loss $\phi$ is classification-calibrated if and only if it is differentiable at $0$ and $\phi'(0)<0$~\citep{bartlett2006}.

Analogously, in classification with rejection, calibration is also an important property that has been used to verify if a surrogate loss is appropriate~\citep{bartlett2008classification,yuan2010,cortes2016learning,cortes2016boosting,ni2019calibration}.
Calibration guarantees that by replacing the zero-one-$c$ loss $\rejectionloss$ with a surrogate loss $\mathcal{L}$, the optimal Chow's rule can still be obtained by minimizing the surrogate risk. 
Verifying the calibration condition in classification with rejection has not been as well-studied as ordinary binary classification.
We are only aware of the works by~\citet{yuan2010} and ~\citet{ni2019calibration}, which provided a condition to verify calibration of general loss functions for the confidence-based approach.
Nevertheless, their condition can only verify a convex loss. 
Note that losses that are calibrated in our cost-sensitive approach may not be calibrated in the confidence-based approach, e.g., the sigmoid loss.
See Table~\ref{table:cc-binary-loss} for more details.

Now we are ready to show that the calibration condition of our proposed approach is equivalent to the classification-calibration condition of $\phi$.
Let us define the pointwise conditional surrogate risk $W_\mathcal{L}$ of an input $\x$ with its class-posterior probability $\veceta(\x)$ for the multiclass case:
\begin{align}
    W_\mathcal{L} \big(\vecg(\x)); \veceta(\x) \big) = \sum_{y \in \mathcal{Y}} \eta_y(\x) \mathcal{L}\big( \vecg; \x, y \big).
\end{align}
By analyzing the classification rule with respect to the conditional risk minimizer, we obtain the following theorem (its proof can be found in Appendix~\ref{proof:calib}).
\begin{theorem}
\label{thm:calib}
Let $g^*$ be a conditional risk minimizer that minimizes the pointwise conditional surrogate risk $g^*(\x) = \argmin_g W_{\mathcal{L}^{c,{\phi}}_\mathrm{CS}} \big(\vecg(\x); \veceta(\x)))$.
The surrogate loss $\mathcal{L}^{c,{\phi}}_\mathrm{CS}$ is calibrated for classification with rejection, that is, $f(\x;\vecg^*)=f^*(\x)$ for all $\x \in \mathcal{X}$, if and only if $\phi$ is classification-calibrated.
\end{theorem}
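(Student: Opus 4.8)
The plan is to reduce the multiclass conditional surrogate risk to a sum of $K$ independent binary conditional $\phi$-risks and then invoke classification-calibration of $\phi$ coordinatewise. First I would substitute Definition~\ref{def:proposed-surr} into $W_{\mathcal{L}^{c,\phi}_\mathrm{CS}}(\vecg(\x);\veceta(\x))=\sum_{y}\eta_y(\x)\,\mathcal{L}^{c,\phi}_\mathrm{CS}(\vecg;\x,y)$, swap the order of summation in the double sum $\sum_y\eta_y\sum_{y'\neq y}\phi(-g_{y'})$, and use $\sum_{y\neq y'}\eta_y=1-\eta_{y'}$ to arrive at the decoupled form
\[
W_{\mathcal{L}^{c,\phi}_\mathrm{CS}}(\vecg(\x);\veceta(\x))=\sum_{y\in\mathcal{Y}}\Big[\,c\,\eta_y(\x)\,\phi(g_y(\x))+(1-c)(1-\eta_y(\x))\,\phi(-g_y(\x))\,\Big].
\]
Because the scores $g_1(\x),\dots,g_K(\x)$ are unconstrained real numbers, the conditional risk minimizer $\vecg^*(\x)$ is obtained by minimizing each summand $W_y(v):=c\,\eta_y(\x)\,\phi(v)+(1-c)(1-\eta_y(\x))\,\phi(-v)$ over $v\in\R$ separately. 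Next I would write $W_y(v)=Z_y\,C^{\phi}_{\widetilde\eta_y}(v)$ with $Z_y:=c\,\eta_y(\x)+(1-c)(1-\eta_y(\x))\in(0,1)$ and effective positive-class probability $\widetilde\eta_y:=c\,\eta_y(\x)/Z_y$, and record the elementary equivalence $\widetilde\eta_y>\tfrac12\iff\eta_y(\x)>1-c$ (and likewise with $=$ and $<$). Thus minimizing $W_y$ asks the sign of $g^*_y(\x)$ to agree with $\mathrm{sign}(2\widetilde\eta_y-1)=\mathrm{sign}(\eta_y(\x)-(1-c))$, i.e.\ with the label of the one-versus-rest cost-sensitive Bayes classifier $f^{*,y}_{1-c}(\x)$ from Section~\ref{sec:sub-proposed-multi}.

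\textbf{Sufficiency.} If $\phi$ is classification-calibrated, then for every $y$ with $\eta_y(\x)\neq 1-c$ (so $\widetilde\eta_y\neq\tfrac12$) the calibration inequality forces every minimizer $g^*_y(\x)$ of $W_y$ to satisfy $g^*_y(\x)(2\widetilde\eta_y-1)>0$; in particular $g^*_y(\x)>0\iff\eta_y(\x)>1-c$ and $g^*_y(\x)<0$ otherwise, with the value $0$ excluded. Since $c<\tfrac12$, at most one class can have $\eta_y(\x)>1-c$, hence at most one $g^*_y(\x)$ is positive and Cond.~\eqref{eq:morethanone-reject} never fires at $\vecg^*$; Cond.~\eqref{eq:amb-reject}, $\max_y g^*_y(\x)\le 0$, then holds precisely when $\max_y\eta_y(\x)<1-c$, i.e.\ when $\max_y f^{*,y}_{1-c}(\x)=-1$, and when it fails, $\argmax_y g^*_y(\x)$ is the unique class with $\eta_y(\x)>1-c$, which is $\argmax_y f^{*,y}_{1-c}(\x)$. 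Feeding this into the classification rule~\eqref{eq:final-rule} and comparing with Proposition~\ref{prop:chow-scott-multi} yields $f(\x;\vecg^*)=f^*(\x)$. On the boundary set $\{\x:\eta_y(\x)=1-c\text{ for some }y\}$, predicting that $y$ and rejecting incur the same pointwise $\rejectionloss$-risk, so the identity holds under the usual convention; I would state this caveat explicitly.

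\textbf{Necessity.} For the converse I would argue by contraposition. If $\phi$ is not classification-calibrated, choose $\eta^\dagger\neq\tfrac12$ with $\inf_{v(2\eta^\dagger-1)\le 0}C^{\phi}_{\eta^\dagger}(v)=\inf_v C^{\phi}_{\eta^\dagger}(v)$; since $C^{\phi}_{1-\eta}(v)=C^{\phi}_{\eta}(-v)$, we may assume $\eta^\dagger>\tfrac12$, so the infimum is attained on $\{v\le 0\}$ (or approached by a minimizing sequence there). As $\eta\mapsto c\eta/(c\eta+(1-c)(1-\eta))$ is a continuous bijection of $[0,1]$ onto itself, there is $\eta^\dagger_\star>1-c$ whose effective probability equals $\eta^\dagger$. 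Now build a posterior vector with $\eta_{y_0}(\x)=\eta^\dagger_\star$ for one class $y_0$ and the remaining mass placed on a second class, which is possible with all remaining entries below $1-c$ because $\eta^\dagger_\star>1-c>\tfrac12$. Then $W_{y_0}$ has a minimizer $g^*_{y_0}(\x)\le 0$, so the rule~\eqref{eq:final-rule} cannot output $y_0$ at this $\x$, whereas Chow's rule predicts $y_0$; hence $\mathcal{L}^{c,\phi}_\mathrm{CS}$ is not calibrated. (If the infimum is not attained, replace the minimizer by a minimizing sequence as in \citet{bartlett2006}.)

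\textbf{Main obstacle.} The conceptual heart is the decoupling identity and the reweighting $\eta_y\mapsto\widetilde\eta_y$ in the first paragraph; the delicate part is converting the pointwise sign information about $\vecg^*$ into the equality $f(\x;\vecg^*)=f^*(\x)$ through the two rejection conditions~\eqref{eq:amb-reject}--\eqref{eq:morethanone-reject}. This is exactly where $c<\tfrac12$ is used—to rule out ambiguity rejection at the optimum and to make the $\argmax$ unique—and where the boundary case $\eta_y(\x)=1-c$, the construction of a consistent probability vector in the necessity direction, and the possible non-attainment of the relevant infima all require care.
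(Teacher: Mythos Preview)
Your argument is correct. The decoupling identity and the sufficiency direction match the paper's proof exactly (the paper invokes Scott's $\alpha$-classification-calibration where you carry out the reweighting $\widetilde\eta_y=c\eta_y/Z_y$ by hand, but these are the same step). The necessity direction, however, is handled differently. The paper splits into two cases according to whether the sign mismatch induced by the failure of $\phi$ occurs at a posterior with $\max_y\eta_y>1-c$; when it does not (Case~2), the paper builds a counterexample in the rejection region by placing mass $\beta\le 1-c$ on one class and spreading the rest so that every other $\eta_{y'}<c$, which explicitly needs ``a sufficient number of classes.'' You instead use the symmetry $C^\phi_{1-\eta}(v)=C^\phi_\eta(-v)$ to assume without loss of generality that calibration fails at some $\eta^\dagger>\tfrac12$, then invert the bijection $\eta\mapsto c\eta/(c\eta+(1-c)(1-\eta))$ to obtain $\eta^\dagger_\star>1-c$ and land directly in the prediction region with a two-class posterior. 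This is tighter: it avoids the case split, removes the dependence on $K$ (your construction works for every $K\ge 2$), and makes transparent that any failure of $\phi$ can be witnessed where Chow's rule predicts rather than rejects. Your explicit treatment of the boundary $\eta_y=1-c$ and of non-attained infima is also more careful than the paper's.
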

Theorem~\ref{thm:calib} suggests that the condition to verify if our cost-sensitive surrogate loss $\mathcal{L}^{c,{\phi}}_\mathrm{CS}$ is calibrated is equivalent to the condition of whether $\phi$ is classification-calibrated.
As long as a binary surrogate loss $\phi$ is classification-calibrated, minimizing the surrogate risk w.r.t.~$\mathcal{L}^{c,{\phi}}_\mathrm{CS}$ can lead to the optimal Chow's rule.
As a result, Theorem~\ref{thm:calib} successfully borrows the knowledge in the literature of binary classification to help prove calibration in classification with rejection for the cost-sensitive approach.





\begin{figure*}[t]
\centering
\includegraphics[width=\linewidth]{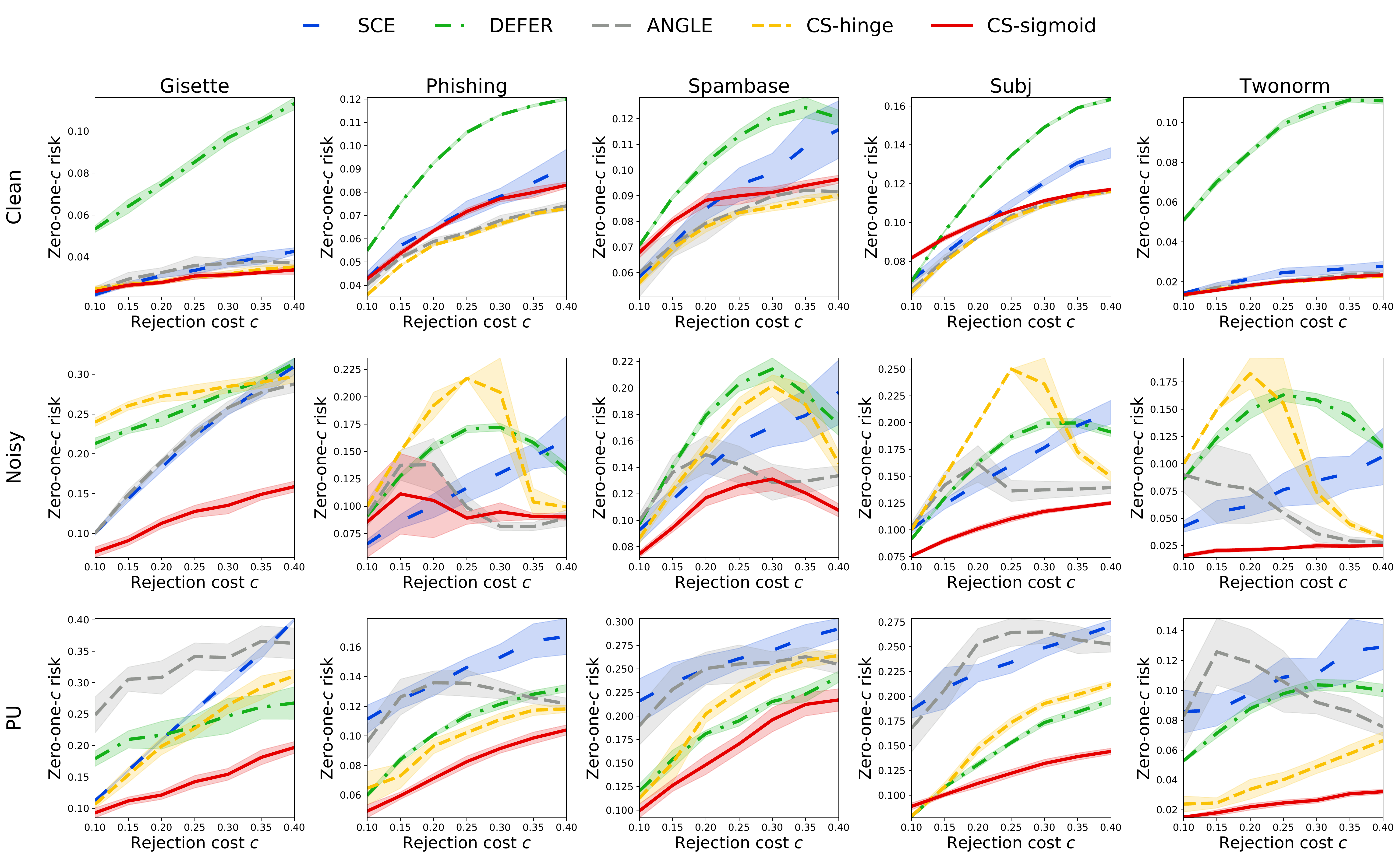}
\caption{Mean and standard error of the test empirical zero-one-$c$ risk over ten trials with varying rejection costs (Binary classification).
Each column indicates the performance with respect to one dataset.
(Top) clean-labeled classification with rejection. 
(Middle) noisy-labeled classification with rejection. 
(Bottom) PU-classification with rejection.}
\label{fig:bin-01c}
\end{figure*}

\subsection{Excess Risk Bound}
While calibration ensures that the optimal solution w.r.t.~a surrogate loss agrees with the optimal Chow's rule, an excess risk bound provides a regret bound relationship between the zero-one-$c$ loss $\zeroonedloss$ and a surrogate loss $\mathcal{L}$.

Let $R^{\phi,i}_{1-c}(g_i)$ be the cost-sensitive binary surrogate risk for class $i$ and $R^{\mathcal{L},*}$ be the minimum risk w.r.t.~to the loss~$\mathcal{L}$.
We prove the following theorem, which is our main result to use for deriving the excess risk bound of the cost-sensitive approach for any classification-calibrated loss (its proof can be found in Appendix~\ref{proof:excess}).

\begin{theorem}
\label{thm:excess}
Consider a cost-sensitive surrogate loss $\mathcal{L}^{c,{\phi}}_\mathrm{CS}$. 
Let $f$ be a classification rule of the cost-sensitive approach with respect to the score function $\vecg$, that is, $f=f(\x;\vecg)$ for an input $\x$.
If a binary surrogate loss $\phi$ is classification-calibrated, the excess risk bound can be expressed as follows:
\begin{align}
    R^{\zeroonedloss}(f)-R^
    {\zeroonedloss,*} &\leq  R^{ \mathcal{L}^{c,{\zerooneloss}}_\mathrm{CS}}(\vecg)-R^{ \mathcal{L}^{c,{\zerooneloss}}_\mathrm{CS},*} 
    \label{eq:excess-01}
    \\
    &\leq \sum_{i=1}^K \psi_{\phi,1-c}^{-1}(R^{\phi,i}_{1-c}(g_i)-R_{1-c}^{\phi,i,*}),
    \label{eq:excess-binary}
\end{align}
where $\psi_{\phi,1-c}\colon \R\to\R$ is a non-decreasing invertible function and $\psi_{\phi,1-c}(0)=0$.
\end{theorem}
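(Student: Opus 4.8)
The plan is to prove the two inequalities separately, chaining them through the zero-one-$c$ loss as an intermediate surrogate.

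\textbf{Step 1: Reduce the zero-one-$c$ excess risk to a cost-sensitive zero-one excess risk (inequality \eqref{eq:excess-01}).} I would work pointwise in $\x$, fixing $\veceta(\x)$. Using the definition of $\mathcal{L}^{c,\zerooneloss}_\mathrm{CS}$ (Definition~\ref{def:proposed-surr} with $\phi=\zerooneloss$, i.e.\ $\phi(z)=\vecone_{[z\le 0]}$ or the appropriate margin zero-one loss), I would expand the conditional surrogate risk $W_{\mathcal{L}^{c,\zerooneloss}_\mathrm{CS}}(\vecg(\x);\veceta(\x))$ and show that, after the classification rule $f(\x;\vecg)$ in Eq.~\eqref{eq:final-rule} is applied, the excess of $W_{\mathcal{L}^{c,\zerooneloss}_\mathrm{CS}}$ over its minimum upper-bounds the conditional excess of $\rejectionloss$. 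Concretely, for each value of $\vecg(\x)$ the rule in Eq.~\eqref{eq:final-rule} produces one of the outcomes in $\{1,\dots,K,\rej\}$; I would do a case analysis on which region $\vecg(\x)$ falls in (all negative; exactly one positive; at least two positive) and compare the incurred $\rejectionloss$-risk against the $\mathcal{L}^{c,\zerooneloss}_\mathrm{CS}$-risk. The key computation is that minimizing $\sum_y \eta_y\big(c\,\vecone_{[g_y\le 0]}+(1-c)\sum_{y'\ne y}\vecone_{[-g_{y'}\le 0]}\big)$ over $\vecg$ reproduces exactly the structure of Chow's rule as in Proposition~\ref{prop:chow-scott-multi}, so the minimizer of the surrogate conditional risk corresponds to $f^*(\x)$; then any suboptimal $\vecg$ pays at least as much extra in $\mathcal{L}^{c,\zerooneloss}_\mathrm{CS}$ as in $\rejectionloss$. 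Taking expectations over $\x$ gives \eqref{eq:excess-01}.

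\textbf{Step 2: Bound the cost-sensitive zero-one excess risk by surrogate excess risks (inequality \eqref{eq:excess-binary}).} Here I would decompose $\mathcal{L}^{c,\zerooneloss}_\mathrm{CS}$ and $\mathcal{L}^{c,\phi}_\mathrm{CS}$ across the $K$ one-versus-rest binary problems. The point is that the loss $\mathcal{L}^{c,\phi}_\mathrm{CS}(\vecg;\x,y)=c\phi(g_y(\x))+(1-c)\sum_{y'\ne y}\phi(-g_{y'}(\x))$ is additively separable over coordinates $g_i$: the contribution involving $g_i$ is $\eta_i(\x)\,c\,\phi(g_i(\x))+(1-\eta_i(\x))(1-c)\phi(-g_i(\x))$, which is precisely (up to the normalization by the effective binary class prior) the conditional risk of a cost-sensitive binary problem with false-positive cost $1-c$ for class $i$. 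Applying the classical cost-sensitive surrogate calibration result—i.e.\ Theorem~1 of \citet{bartlett2006} adapted to the weighted/cost-sensitive setting via the transform $\psi_{\phi,1-c}$, which is convex, non-decreasing, invertible and satisfies $\psi_{\phi,1-c}(0)=0$ whenever $\phi$ is classification-calibrated—to each of the $K$ coordinates yields $R^{\mathcal{L}^{c,\zerooneloss}_\mathrm{CS},i}(g_i)-R^{\mathcal{L}^{c,\zerooneloss}_\mathrm{CS},i,*}\le \psi_{\phi,1-c}^{-1}\big(R^{\phi,i}_{1-c}(g_i)-R^{\phi,i,*}_{1-c}\big)$. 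Summing over $i$ and using $R^{\mathcal{L}^{c,\zerooneloss}_\mathrm{CS}}(\vecg)-R^{\mathcal{L}^{c,\zerooneloss}_\mathrm{CS},*}=\sum_i\big(R^{\mathcal{L}^{c,\zerooneloss}_\mathrm{CS},i}(g_i)-R^{\mathcal{L}^{c,\zerooneloss}_\mathrm{CS},i,*}\big)$ (valid because the minimization decouples across coordinates) gives \eqref{eq:excess-binary}.

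\textbf{Main obstacle.} I expect the delicate part to be Step 1: verifying that the classification rule in Eq.~\eqref{eq:final-rule}—which includes the extra ambiguity-rejection clause Cond.~\eqref{eq:morethanone-reject} that never fires at the optimum—does not break the pointwise comparison between $\rejectionloss$-excess and $\mathcal{L}^{c,\zerooneloss}_\mathrm{CS}$-excess. One must check every configuration of signs of $\vecg(\x)$, including the ``at least two positive'' case, and confirm that the surrogate penalty paid in that configuration is never smaller than the $\rejectionloss$ penalty incurred by rejecting. A secondary subtlety is handling ties ($\eta_y(\x)=1-c$ exactly, or $g_y(\x)=0$ exactly) consistently with the tie-breaking conventions in Chow's rule and in $\argmax$; I would resolve these by noting they form a measure-zero issue for the inequality or by adopting the same conventions as in Definition~\ref{def:chow-mul} and Definition~\ref{def:scott-optimal}. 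The transform $\psi_{\phi,1-c}$ itself I would simply import: it is the standard $\psi$-transform of \citet{bartlett2006} instantiated at the cost-sensitive conditional Bayes risk, whose stated properties follow directly from classification-calibration of $\phi$.
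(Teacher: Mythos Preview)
Your proposal is correct and follows essentially the same two-step route as the paper: a pointwise case analysis linking $\rejectionloss$ to $\mathcal{L}^{c,\zerooneloss}_\mathrm{CS}$ for \eqref{eq:excess-01}, then a coordinate-wise decomposition plus the cost-sensitive $\psi$-transform of \citet{scott2012calibrated} for \eqref{eq:excess-binary}. One small tactical difference worth noting: in Step~1 the paper does not compare conditional \emph{excess} risks directly but instead proves the stronger pointwise-in-$(\x,y)$ inequality $\zeroonedloss(f(\x;\vecg),y)\le \mathcal{L}^{c,\zerooneloss}_\mathrm{CS}(\vecg;\x,y)$ and separately shows $R^{\zeroonedloss,*}=R^{\mathcal{L}^{c,\zerooneloss}_\mathrm{CS},*}$; this makes the case analysis (indexed by the value $0$, $1$, or $c$ of $\zeroonedloss$) slightly shorter than your sign-pattern version, and in particular the ``at least two positive'' case you flag as the main obstacle is dispatched by observing that the surrogate loss there is at least $1-c\ge c$.
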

Ineq.~\eqref{eq:excess-01} suggests that the regret of the classification with rejection problem can be bounded by the regret of the cost-sensitive surrogate with respect to the zero-one loss~$\zerooneloss$. 
This inequality allows us to borrow the existing findings of cost-sensitive classification to  give excess risk bounds for classification with rejection.
Next, Ineq.~\eqref{eq:excess-binary} suggests that $R^{ \mathcal{L}^{c,{\zerooneloss}}_\mathrm{CS}}(\vecg)-R^{ \mathcal{L}^{c,{\zerooneloss}}_\mathrm{CS},*}$ is bounded by the sum of an invertible function of the regret of the cost-sensitive binary classification risk.
The invertible function $\psi_{\phi,1-c}$ is a well-studied function in the literature of cost-sensitive classification, which is guaranteed to exist for any classification-calibrated loss~\citep{steinwart2007compare,scott2012calibrated}.
For example, $\psi^{-1}_{\phi,1-c}(\epsilon) = \frac{\epsilon^2}{2c(1-c) - (\epsilon)(1-2c)}$ for the squared loss, where $\epsilon \geq 0$.
Examples of $\psi_{\phi,1-c}$ for more losses and how to derive $\psi_{\phi,1-c}$ can be found in~\citet{steinwart2007compare} and~\citet{scott2012calibrated}.
Since $\psi_{\phi,1-c}$ is non-decreasing and $\psi_{\phi,1-c}(0)=0$, the regret with respect to the zero-one-$c$ loss will also get smaller and eventually become zero if the surrogate risk is successfully minimized.

As an example to demonstrate how to obtain an excess risk bound with our Theorem~\ref{thm:excess}, we prove that the following excess risk bound holds for the hinge loss $\phi_{\mathrm{hin}}$, which is the loss that cannot estimate the class-posterior probabilities~\citep{svm}, and its optimal solution for the confidence-based approach cannot mimic Chow's rule. The bound can be straightforwardly derived based on our Theorem~\ref{thm:excess} and the known fact that $\psi^{-1}_{\phi_{\mathrm{hin}},1-c}(\epsilon) =~\epsilon$~\citep{steinwart2007compare}.
\begin{corollary}
Let us consider the hinge loss $\phi_{\mathrm{hin}}(z)= \max(0, 1-z)$. The excess risk bound for the cost-sensitive surrogate $\mathcal{L}^{c,{\phi_{\mathrm{hin}}}}_\mathrm{CS}$ can be given as follows: 
\begin{align*}
    R^{\zeroonedloss}(f)-R^{\zeroonedloss,*} 
    &\leq R^{ \mathcal{L}^{c,{\phi_{\mathrm{hin}}}}_\mathrm{CS}}(\vecg)-R^{ \mathcal{L}^{c,{\phi_{\mathrm{hin}}}}_\mathrm{CS},*} .
\end{align*}
\end{corollary}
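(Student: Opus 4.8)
The plan is to obtain this corollary as an immediate specialization of Theorem~\ref{thm:excess}, once we supply the known closed form of the conversion function $\psi_{\phi,1-c}$ for the hinge loss and re-aggregate the per-class bounds. First I would verify the hypothesis of Theorem~\ref{thm:excess}: the hinge loss $\phi_{\mathrm{hin}}(z)=\max(0,1-z)$ is convex, differentiable at $0$, and satisfies $\phi_{\mathrm{hin}}'(0)=-1<0$, so by the characterization of \citet{bartlett2006} recalled above it is classification-calibrated. Hence Ineq.~\eqref{eq:excess-binary} applies with $\phi=\phi_{\mathrm{hin}}$, giving
\[
R^{\zeroonedloss}(f)-R^{\zeroonedloss,*}\leq\sum_{i=1}^{K}\psi_{\phi_{\mathrm{hin}},1-c}^{-1}\!\big(R^{\phi_{\mathrm{hin}},i}_{1-c}(g_i)-R_{1-c}^{\phi_{\mathrm{hin}},i,*}\big).
\]

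Second, I would invoke the result of \citet{steinwart2007compare} that for the hinge loss the cost-sensitive conversion function is the identity, $\psi_{\phi_{\mathrm{hin}},1-c}^{-1}(\epsilon)=\epsilon$ for every $\epsilon\geq 0$ and every cost $1-c\in(0,1)$. Substituting this collapses the right-hand side of the display to $\sum_{i=1}^{K}\big(R^{\phi_{\mathrm{hin}},i}_{1-c}(g_i)-R_{1-c}^{\phi_{\mathrm{hin}},i,*}\big)$.

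Third, I would identify this sum with the cost-sensitive surrogate regret $R^{\mathcal{L}^{c,\phi_{\mathrm{hin}}}_\mathrm{CS}}(\vecg)-R^{\mathcal{L}^{c,\phi_{\mathrm{hin}}}_\mathrm{CS},*}$. Starting from Definition~\ref{def:proposed-surr}, taking the expectation under $p(\x,y)$, and swapping the order of the double sum $\sum_{y}\eta_y(\x)\sum_{y'\neq y}\phi(-g_{y'}(\x))$ so that $\phi(-g_i(\x))$ collects the weight $(1-c)(1-\eta_i(\x))$, the surrogate risk decouples coordinatewise: $R^{\mathcal{L}^{c,\phi}_\mathrm{CS}}(\vecg)=\sum_{i=1}^{K}R^{\phi,i}_{1-c}(g_i)$, where each summand is exactly the one-versus-rest cost-sensitive binary surrogate risk for class $i$ at false-positive cost $\alpha=1-c$. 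Since $g_i$ enters only the $i$-th summand, the infimum over $\vecg$ is taken coordinatewise, so $R^{\mathcal{L}^{c,\phi}_\mathrm{CS},*}=\sum_{i=1}^{K}R^{\phi,i,*}_{1-c}$ as well, and subtracting yields the claimed identity. This coordinatewise decoupling is the same step already used to establish Ineq.~\eqref{eq:excess-binary}, so in practice it can be cited from the proof of Theorem~\ref{thm:excess} rather than redone; combining the three steps gives exactly the stated bound.

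The argument is routine throughout; the only point demanding care — and the nearest thing to an obstacle — is bookkeeping the coefficients after the sum-swap, namely confirming that the coordinate-$i$ term carries weight $c\,\eta_i(\x)$ on $\phi(g_i(\x))$ and $(1-c)(1-\eta_i(\x))$ on $\phi(-g_i(\x))$, which is precisely the pointwise cost-sensitive binary risk at $\alpha=1-c$ (cf.~Definition~\ref{def:scott-optimal}). Getting $\alpha$ wrong here would mismatch the conversion function $\psi_{\phi_{\mathrm{hin}},1-c}$ and invalidate the substitution taken from \citet{steinwart2007compare}, so this is the one place I would double-check explicitly.
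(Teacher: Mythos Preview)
Your proposal is correct and matches the paper's own argument: the paper simply states that the bound follows ``straightforwardly'' from Theorem~\ref{thm:excess} together with the known fact $\psi_{\phi_{\mathrm{hin}},1-c}^{-1}(\epsilon)=\epsilon$ from \citet{steinwart2007compare}. Your three steps are exactly this, spelled out in full, and your coordinatewise decomposition $R^{\mathcal{L}^{c,\phi}_\mathrm{CS}}(\vecg)=\sum_{i=1}^K R^{\phi,i}_{1-c}(g_i)$ is precisely the identity established in the appendix proof of Theorem~\ref{thm:excess} (proof of Ineq.~\eqref{ineq:second_step}), so citing it from there as you suggest is appropriate.
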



\begin{figure*}
\centering
\includegraphics[width=\linewidth]{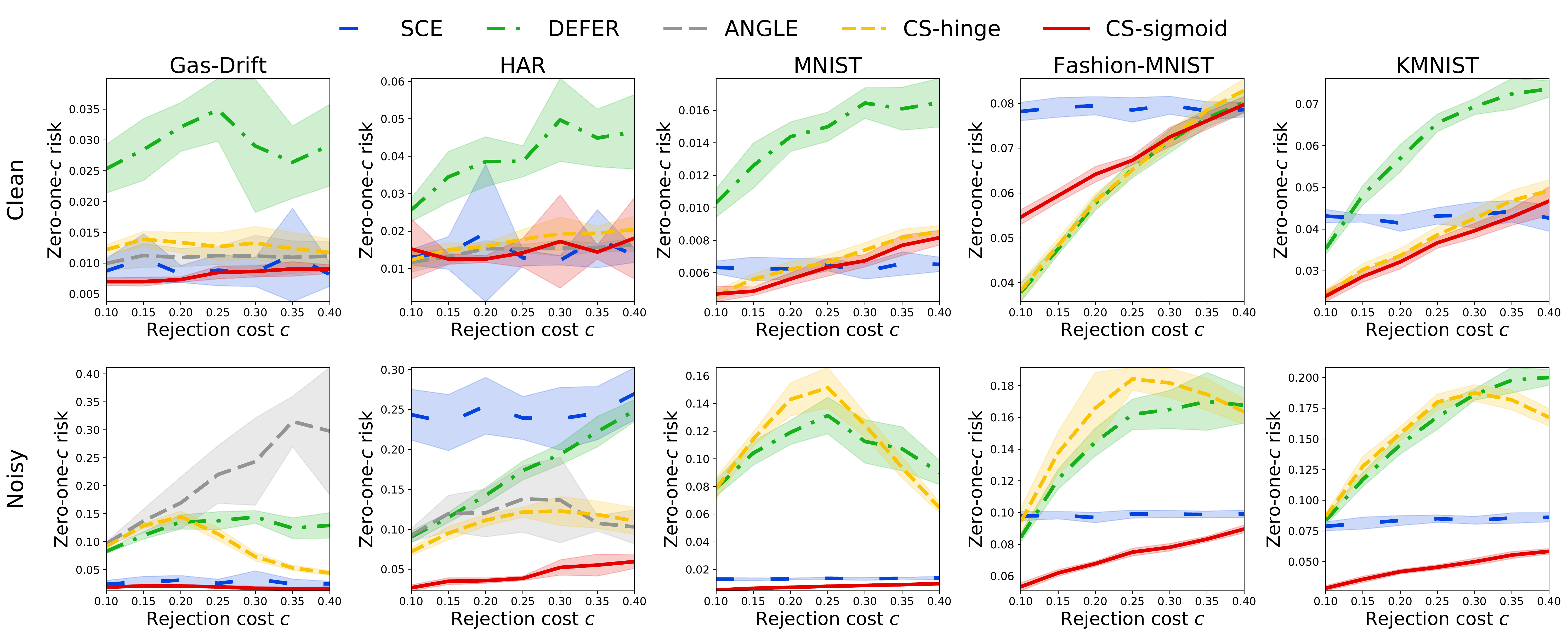}
\vspace{-5mm}
\caption{%
Mean and standard error of the test empirical zero-one-$c$ risk over ten trials with varying rejection cost (Multiclass classification). 
Each column indicates the performance with respect to one dataset.
(Top) clean-labeled classification with rejection. 
(Bottom) noisy-labeled classification with rejection.
For MNIST, Fashion-MNIST, and KMNIST, we found that ANGLE failed miserably and has  zero-one-$c$ risk more than $0.5$ and thus it is excluded from the figure for readability.
}
\vspace{-2mm}
\label{fig:mul}
\end{figure*}

\section{Experimental Results}
In this section, we provide experimental results of classification with rejection.
The evaluation metric is the test empirical zero-one-$c$ risk over ten trials.
We also reported the rejection ratio, accuracy of accepted data, and the full experimental results in the table format in Appendix~\ref{sec:add-exp}. 
The varying rejection costs ranged from $\{0.1,0.15,0.20,0.25,0.30,0.35,0.40\}$ for all settings.
For noisy-labeled classification, we used the uniform noise~\citep{oldccn, ghosh2015making}, where the randomly selected $25\%$ of the training labels were flipped.

\subsection{Experiment Setup}
\textbf{Datasets and models:} 
For binary classification, we used the subjective-versus-objective classification (Subj), which is a text dataset~\citep{pang2004sentimental}.
Moreover, we used Phishing and Spambase, which are tabular datasets, and Twonorm, which is a synthetic dataset drawn from different multivariate Gaussian distributions~\citep{dataset3}. 
We also used the Gisette dataset, which is the problem of separating the highly confusible digits $4$ and $9$ with noisy features~\citep{guyon2005result}. 
Linear-in-input models were used for all binary datasets.
For multiclass classification, we used Gas-Drift~\citep{vergara2012chemical} and Human activity recognition (HAR)~\citep{anguita2013public}, which are tabular datasets. 
Multilayer perceptrons were used for both datasets.
We also used the image datasets, which are MNIST~\citep{lecun1998mnist}, Kuzushiji-MNIST (KMNIST)~\citep{clanuwat2018deep}, and Fashion-MNIST~\citep{xiao2017fashion}.
Convolutional neural networks were used for all image datasets.
The implementation was done using PyTorch~\citep{pytorch2019}. 
More details on the datasets and implementation can be found in Appendix~\ref{sec:exp-detail}.

\textbf{Methods:} 
For the confidence-based approach, based on~\citet{ni2019calibration}, we used the softmax cross-entropy loss (SCE). 
For the classifier-reject approach, we used the proposed method by~\citet{mozannar2020consistent} (DEFER).
We also used the method by~\citet{zhang2018reject} with the bent hinge loss (ANGLE).
For our cost-sensitive approach, we used the hinge (CS-hinge), and sigmoid (CS-sigmoid) losses. 

\textbf{Hyperparameter tuning:}
We provided additional training data for SCE and ANGLE to tune their hyperparameters.
For SCE, we also added temperature scaling~\citep{guo2017calibration} to improve the prediction confidence. 
For ANGLE, we chose the bending slope paramater according to~\citet{zhang2018reject} and tuned the rejection threshold.
In PU-classification, it is difficult to tune hyperparameters for them. 
Thus, we provided clean-labeled data for them \emph{only} for hyperparameter tuning.
Both rejection threshold of ANGLE and the temperature parameter for SCE are chosen from the following candidate set of twenty numbers spaced evenly in a log scale from $0$ to $1$ (inclusively) and nine integers from $2$ to $10$.
Since only SCE and ANGLE require additional data to tune hyperparameters, it is not straightforward to provide a fair comparison because our methods and DEFER do not use validation data.
Nevertheless, with less data, our methods are still competitive and can outperform the baselines in several settings. 


\subsection{Binary Classification with Rejection}
Here, we compare the performance of all methods in clean-labeled, noisy-labeled, and positive-unlabeled classification with rejection.
For PU-classification, we implemented all methods based on the empirical risk minimization framework proposed by~\citet{kiryo2017} (more detail can be found in Appendix~\ref{sec:exp-detail}).

Figure~\ref{fig:bin-01c} shows the performance with varying rejection costs for all settings. 
In clean-labeled classification, it can be observed that CS-hinge and ANGLE are most preferable in this setting.
In noisy-labeled and PU classification, CS-sigmoid outperformed other methods in most cases.
This illustrates the usefulness of having a flexible choice of loss functions. 
We also found that noise can degrade the performance to be worse than always reject for some methods.
Moreover, we found that DEFER rejected data more often than other methods, which may sometimes lead to worse performance.
In PU-classification, SCE and ANGLE did not perform well although clean labeled data were used for hyperparameter tuning. 
This could be due to a steep loss can suffer severely from the negative risk problem~\citep{kiryo2017}, causing them to be ineffective in PU-classification. 


\subsection{Multiclass Classification with Rejection}
Figure~\ref{fig:mul} illustrates the performance of all methods in the clean-labeled and noisy-labeled settings.
It can be observed that SCE had almost the same performance for all rejection costs. 
Although temperature scaling is applied, it seems that SCE still suffered from overconfidence~\citep{guo2017calibration} and failed to reject the ambiguous data points.
This could be due to SCE has the high accuracy on the validation set (more than $90\%$) and thus temperature scaling could not smoothen the prediction confidence to reject the ambiguous data effectively.
Interestingly, DEFER did not suffer from such overconfidence although it is also based on the cross-entropy loss and it rejected the data more than other methods.
For ANGLE, we found that although it can perform competitively in Gas-drift and HAR, it failed miserably in the image datasets. 
For figure's readability, we report the performance of ANGLE in a table format in Appendix~\ref{sec:add-exp}.
In noisy-labeled classification, CS-sigmoid outperformed other methods in most cases.

\section{Conclusions}
We have proposed a cost-sensitive approach to classification with rejection, where any classification-calibrated loss can be applied with theoretical guarantee. 
Our theory of excess risk bounds explicitly connects the classification with rejection problem to the cost-sensitive classification problem.
Our experimental results using clean-labeled, noisy-labeled, and positive and unlabeled training data demonstrated the advantages of avoiding class-posterior probability estimation and having a flexible choice of loss functions. 

\section*{Acknowledgements}
We would like to thank Han Bao, Takeshi Teshima, Chenri Ni, and Junya Honda for helpful discussion, and also the Supercomputing Division, Information Technology Center, The University of Tokyo, for providing us the Reedbush supercomputer system to conduct the experiments.
NC was supported by MEXT scholarship, JST AIP Challenge, and Google PhD Fellowship program. 
ZC was supported by JST AIP Challenge.
MS was supported by the International Research Center for Neurointelligence (WPI-IRCN) at The University of Tokyo Institutes for Advanced Study.


\appendix
\onecolumn
\section{Related work}
\label{app:related_work}
In this section, we provide more discussion on the relationship of our work with~\citet{zhang2018reject} and~\citet{mozannar2020consistent}.

\subsection{Our work and~\citet{zhang2018reject}} ~\citet{zhang2018reject} considers to tackle classification with rejection using angle-based classification approach.

Given $\x$ and $y$. Define
\begin{equation*}
\mathbf{y}_y= 
\begin{cases}
(K-1)^{-\frac{1}{2}} \mathbf{1}_{K-1}& y=1 \text{,}\\
-(1+K^{\frac{1}{2}})/ \{(K-1)^{\frac{3}{2}}\}  \mathbf{1}_{K-1} + \{ K/ (K-1)\}^\frac{1}{2} \mathbf{e}_{y-1} & 2 \leq y \leq K,
\end{cases}
\end{equation*}
where $\mathbf{e}_{y-1}$ denotes the one-hot vector with one at the index $y-1$ and $\mathbf{1}_{K-1} \in R^{k-1}$ denotes a vector of all ones.
Next, let $a \in \R$ be a positive scalar. 
~\citet{zhang2018reject} proposed the following bent hinge loss:
\begin{align*}
    \mathcal{L}^{\mathrm{hin}}_\mathrm{ANGLE}(\vecg; \x, y) = \sum_{y' \neq y} \phi_{\mathrm{hin}} \big(-\mathbf{y}_y^\top \vecg(\x) \big),
\end{align*}
where
\begin{align*}
\phi_{\mathrm{hin}}(u)=
\begin{cases}
1-au & u<0\\ 
1-u & 0 \leq u \leq 1 \text{,}\\
0 & \rm{otherwise.}
\end{cases}
\end{align*}
and the bent distance weighted discrimination loss:
\begin{align*}
    \mathcal{L}^{\mathrm{dwd}}_\mathrm{ANGLE}(\vecg; \x, y) = \sum_{y' \neq y} \phi_{\mathrm{hin}} \big(-\mathbf{y}_y^\top \vecg(\x) \big),
\end{align*}
where
\begin{align*}
\phi_{\mathrm{dwd}}(u)=
\begin{cases}
1-au & u<0\\ 
1-u & 0 \leq u \leq 0.5 \text{,}\\
\frac{1}{4u} & \rm{otherwise.}
\end{cases}
\end{align*}

Let $S_{\delta}(v) = \mathrm{sign}(v) \max (|v|-\delta ,0)$. 
After training a classifier with their proposed loss function, the decision rule can be expressed as 
\begin{align*}
f(\x; \vecg)=
\begin{cases}
\textrm{\rej} &  \forall j \, \, \text{s.t.} \, \, S_{\delta}(\mathbf{y}_j^\top \vecg(\x)) = 0, \\ 
\argmax_y \mathbf{y}_y^\top \vecg(\x) & \rm{otherwise.}
\end{cases}
\end{align*}
In their rejection rule, an input $\x$ is rejected if all binary classifiers' outputs are close to zero.
In our cost-sensitive approach, Cond.~\eqref{eq:amb-reject} rejects~$\x$ as long as all $g_y(\x)$'s are negative, e.g., $\x$ is also rejected if the all prediction outputs are much smaller than zero. 
Moreover, we do not have any hyperparameter in our rejection rule.


There are two hyperparameters that are needed to be tuned for the angle-based method, which are a bending slope $a$ and a rejection threshold $\delta$\footnote{There is also a hyperparameter for determining the regularization strength, which is omitted here for brevity.}.
\citet{zhang2018reject} defined the following quantities given the rejection cost $c$ and the number of clases $K$:
\begin{align*}
    a_1&=\frac{K-1-c}{Kc-c}, \quad a_2=\frac{(K-1)(1-c)}{c}.
\end{align*}
It was suggested that the choice of of $a$ can be chosen by either $a=a_1$ or $a=a_2$.  
For $\delta$, it needs to be tuned with the validation set with respect to the validation empirical zero-one-$c$ risk. 

It can be observed that our approach and~\citet{zhang2018reject} are different. 
The modification of~\citet{zhang2018reject} is based on angle-based classification while it is based on the cost-sensitive one-vs-rest loss in our case.
Moreover, no additional hyperparameter is introduced and we do not modify the loss by bending it to be steeper (for the hinge loss). 
Note that the proposed bending scheme of~\citet{zhang2018reject} will lead any loss to be positively unbounded, including the symmetric losses, which may cause them to lose their favorable properties. 
Also, it is not straightforward to design a bending scheme for any loss to the best of our knowledge. 
In our approach, any classification-calibrated loss can be straightforwardly applied in the surrogate loss in Definition~\ref{def:proposed-surr}.
Finally, our Theorem~\ref{thm:excess} made it possible to transfer the excess risk bound from cost-sensitive classification~\citep{steinwart2007compare,scott2012calibrated} to classification with rejection.
We are not aware of other works that discuss a theory that can explicitly show the excess risk bound relationship between cost-sensitive classification and cost-sensitive classification.

It is worth noting that~\citet{zhang2018reject} also considered a different setting called classification from refine options, where a classifier is also allowed to predict a set of labels instead of one label. 
\begin{align}
\label{eq:zhang-rule}
f(\x; \vecg)=
\begin{cases}
\textrm{\rej} &  \forall j \, \, \text{s.t.} \, \, S_{\delta}(\mathbf{y}_j^\top \vecg(\x)) = 0, \\ 
\{j:  S_{\delta}(\mathbf{y}_j^\top \vecg(\x)) > 0\} &  \exists j \, \, \text{s.t.} \, \, S_{\delta}(\mathbf{y}_j^\top \vecg(\x)) > 0, \\ 
\{j:  S_{\delta}(\mathbf{y}_j^\top \vecg(\x)) = 0\} & \rm{otherwise.}
\end{cases}
\end{align}
It can be observed that the second condition in Eq.~\eqref{eq:zhang-rule}, i.e., $\exists j \, \, \text{s.t.} \, \, S_{\delta}(\mathbf{y}_j^\top \vecg(\x)) > 0$ is similar to our Cond.~\eqref{eq:morethanone-reject}, which occurs when a classifier wants to predict more than one classes.
Nevertheless, in our problem, it is not allowed to predict a set and we propose to reject a data point in this scenario.
It is also interesting to explore the problem of learning with refine options with our proposed cost-sensitive approach and we leave it for future work.

\subsection{Our work and~\citet{mozannar2020consistent}}
Recently,~\citet{mozannar2020consistent} has proposed a method for classification with rejection based on a reduction to cost-sensitive learning. 
However, the reduction scheme is different to ours
since they proposed to augment a rejection class in the model and the loss choice is fixed to the cross-entropy loss.
The main idea is to augment a rejection class $K+1$ in the score function $\vecg$.
Rejection will be made if the maximum score of $\vecg$ is at index $K+1$.
Given the rejection cost $c$ and a score function $\vecg:\R^d \to \R^{K+1}$, the loss function proposed by~\citet{mozannar2020consistent} can be expressed as 
\begin{align*}
    \mathcal{L}^{c}_\mathrm{DEFER}(\vecg; \x, y) = \mathrm{log}\left(\frac{\exp(g_y(\x))}{\sum_{j=1}^{K+1} \exp(g_j(\x))}\right)  + (1-c) \log \left(\frac{\exp(g_{K+1}(\x))}{\sum_{j=1}^{K+1} \exp(g_j(\x))}\right).
\end{align*}

It can be seen that our cost-sensitive approach gives a different form of loss function, that is, their loss function is not a special case of our cost-sensitive approach and vice versa.
Moreover, the theoretical analysis in~\citet{mozannar2020consistent} is based on analyzing this specific loss. 
Unlike our work, it may not be straightforward to borrow the theory of cost-sensitive classification~\citep{steinwart2007compare,scott2012calibrated} to justify the theoretical properties of a general surrogate loss function for classification with rejection in their approach.
It is worth pointing out that one advantage of the loss function proposed by~\citet{mozannar2020consistent} over our approach is that it is applicable to the situation where the rejection cost can be different for each $\x$ (see~\citet{mozannar2020consistent} for more detail).
Nevertheless, in our problem setting, the rejection cost is assumed to be a constant.

\section{Proofs}
In this section, we provide proofs for the theoretical results in the main body.
\subsection{Proof of Proposition~\ref{prop:chow-scott-multi}}
Based on the following Chow's rule:
\begin{equation*}
f^*(\x)=
\begin{cases}
\textrm{\rej} & \max_y \eta_y(\x) \leq 1-c \text{,}\\
\argmax_y \eta_y(\x) & \text{otherwise.}
\end{cases}
\end{equation*}
It is straightforward to see that we can mimic Chow's rule by only knowing whether
\begin{align}
    \label{app:cond1}
    \eta_y(\x) \leq 1-c \quad \text{for all} \quad y \in \mathcal{Y},
\end{align}
and 
\begin{align*}
    \argmax_y \eta_y(\x). 
\end{align*}

This is because if Condition~\eqref{app:cond1} is true, then a classifier refrains from making a prediction, otherwise a classifier predicts a class $\argmax_y \eta_y(\x)$, which matches Chow's rule.  

To verify whether $\eta_y(\x) \leq 1-c$ for a class $y$, it suffices to learn a cost-sensitive binary classifier where $\alpha=1-c$ to classify between a target class $y$ and other classes (i.e., one-versus-rest classifier), as suggested in Definition~\ref{def:scott-optimal}.
We define such optimal cost-sensitive binary classifier as $f^{*,y}_{1-c}$.
As a result, we can construct $K$ cost-sensitive binary classification problems where $\alpha=1-c$ to verify Condition~\eqref{app:cond1}, that is, Condition~\eqref{app:cond1} is true if and only if $f^{*,y}_{1-c}=-1$ for all $y\in\mathcal{Y}$.

Next, we show that if Condition~\eqref{app:cond1} is false based on learning $K$ cost-sensitive binary classifiers, then it is sufficient to verify $ \argmax_y \eta_y(\x)$. This is because of the rejection cost $c$ is less $0.5$. 
Thus, if Condition~\eqref{app:cond1} is false, the only possibility is that there must exists one $y'$ such that $\eta_y'(\x) > 1-c$. The reason it can have at most one $y'$ to have $f^{*,y'}_{1-c}=1$ is because it indicates that $\eta_y' > 0.5$. Thus, the following rule can mimic Chow's rule.

\begin{equation*}
f^*(\x)=
\begin{cases}
\textrm{\rej} & \max_y f^{*,y}_{1-c}(\x) = -1 \text{,}\\
\argmax_y f^{*,y}_{1-c}(\x) & \rm{otherwise.}
\end{cases}
\end{equation*}
This concludes the proof.

\textbf{Remark:} we note that if Condition~\eqref{app:cond1} is true, it may not be possible to know $\argmax_y \eta_y(\x)$ given $K$ optimal cost-sensitive binary classifiers in general. 
However, in classification with rejection, it is not important to know $\argmax_y \eta_y(\x)$ if Condition~\eqref{app:cond1} is true since a classifier will refrain from making a prediction.

\label{proof:prop-chow-mult}
\subsection{Proof of Theorem~\ref{thm:calib}}
Let $g^*$ be a conditional risk minimizer that minimizes the pointwise conditional surrogate risk:
\begin{align*}
    g^*(\x) = \argmin_g W_{\mathcal{L}^{c,{\phi}}_\mathrm{CS}} \big(\vecg(\x); \veceta(\x)))
\end{align*}
Recall that the cost-sensitive surrogate loss is defined as
\begin{align*}
        \mathcal{L}^{c,\phi}_\mathrm{CS}(\vecg; \x, y) &= c \phi \big( g_{y}(\x)\big) + (1-c) \sum_{y' \neq y} \phi \big( -g_{y'}(\x) \big).
\end{align*}

Thus,

\begin{align*}
    W_{\mathcal{L}^{c,{\phi}}_\mathrm{CS}} \big(\vecg(\x); \veceta(\x))) &= \sum_{y \in \mathcal{Y}} \eta_y(\x) \mathcal{L}^{c,{\phi}}_\mathrm{CS} \big( \vecg; \x, y \big) \\
    &= \sum_{y \in \mathcal{Y}} \eta_y(\x) \left[ c \phi \big( g_{y}(\x)\big) + (1-c) \sum_{y' \neq y} \phi \big( -g_{y'}(\x) \big) \right]. \numberthis \label{eq:one}
\end{align*}
We can rewrite Eq.~\eqref{eq:one} as follows based on the perspective of $g_y$:
\begin{align*}
    W_{\mathcal{L}^{c,{\phi}}_\mathrm{CS}} \big(\vecg(\x); \veceta(\x))) 
    &= \sum_{y \in \mathcal{Y}} \eta_y(\x) \left[ c \phi \big( g_{y}(\x)\big) + (1-c) \sum_{y' \neq y} \phi \big( -g_{y'}(\x) \big) \right]. \\
    &= \sum_{y \in \mathcal{Y}} \left[ \eta_y(\x) c \phi \big( g_{y}(\x)\big) + (1-\eta_y(\x)) (1-c) \phi \big( -g_{y}(\x) \big) \right].
\end{align*}

It can be seen that $\eta_y(\x) c \phi \big( g_{y}(\x)\big) + (1-\eta_y(\x)) (1-c) \phi \big( -g_{y}(\x) \big)$ is a pointwise conditional risk of a cost-sensitive binary classifier $g_{y}$.
Thus, minimizing $W_{\mathcal{L}^{c,{\phi}}_\mathrm{CS}}$ can be viewed as independently minimizing the pointwise conditional risk in cost-sensitive binary classification for each $g_{y}$.
Thus $g^*_{y}$ corresponds to the conditional risk minimizer of the cost-sensitive binary classification where $y$ is a positive class and $y' \neq y$ is a negative class.

Recall the definition of $f(\x; \vecg^*)$:
\begin{align*}
f(\x; \vecg^*)=
\begin{cases}
\textrm{\rej} & \max_y g^*_y(\x) \leq 0 \text{,}\\
\textrm{\rej} & \makecell[l]{\exists y,y' \, \text{s.t.} \, y\neq y' \\ \text{and} \, g^*_y(\x), g^*_{y'}(\x) > 0\text{,}}\\
\argmax_y g^*_y(\x) & \rm{otherwise}
\end{cases}
\end{align*}
and $f^*(\x)$:
\begin{equation*}
f^*(\x)=
\begin{cases}
\textrm{\rej} & \max_y \eta_y(\x) \leq 1-c \text{,}\\
\argmax_y \eta_y(\x) & \text{otherwise.}
\end{cases}
\end{equation*}

First, we prove that $f(\x;\vecg^*)=f^*(\x)$ if $\phi$ is classification-calibrated.

The proof is based on the definition of $\alpha$-classification calibration ($\alpha$-CC) proposed by~\citep{scott2012calibrated} and its relationship with ordinary classification calibration~\citep{bartlett2006}, which is equivalent to $0.5$-CC. 
More specifically, it is known that a margin loss $\phi$ must also be classification-calibrated if it is $\alpha$-CC, i.e., its conditional risk minimizer matches the Bayes optimal classifier of cost-sensitive binary classification when using the weighted risk minimization based on $\alpha$~\citep{scott2012calibrated}. 

For a classification-calibrated margin loss $\phi$,  $\text{sign}(g^*_y)$ matches the Bayes optimal solution of the cost-sensitive binary classification~\citep{scott2012calibrated}, that is, $g^*_y(\x) >0$ if $\eta_y(\x) > 1-c$ and $g^*_y(\x) <0$ otherwise.
Thus if $g^*$ is obtained, the condition 
\begin{align}
    \label{cond:conflict}
    \exists y,y' \, \text{s.t.} \, y\neq y'  \text{and} \, g^*_y(\x), g^*_{y'}(\x) > 0
\end{align}
is impossible to occur since it is impossible to have $g^*_y(\x)>0$ and  $g^*_{y'}>0$ simultaneously (because that can occur only if $1-c < 0.5$ which is impossible if $c <0.5$.). 
Thus, it suffices to look at
\begin{align*}
f(\x; \vecg^*)=
\begin{cases}
\textrm{\rej} & \max_y g^*_y(\x) \leq 0 \text{,}\\
\argmax_y g^*_y(\x) & \rm{otherwise.}
\end{cases}
\end{align*}
$\max_y g^*_y(\x) \leq 0$ indicates that $\eta_y(\x) \leq 1-c$ for all $y\in\mathcal{Y}$ and thus coincides with the rejection criterion of Chow's rule.
On the other hand, if $\max_y g^*_y(\x) > 0$, then there exists only one $y$ such that $g^*_y(\x) > 0$ and $\eta_y(\x) > 1-c$. 
Thus, $f(\x; \vecg^*) = f^*(\x)$ if $\phi$ is classification-calibrated.

Next, we prove the converse of our theorem, that is, if a margin loss $\phi$ is not classification-calibrated, then there must exist the case where $\vecg^{\text{*w}}$ disagrees with Chow's rule, where $g^{\text{*w}}$ denotes an optimal solution for $\phi$ that is not classification-calibrated.

Here, let us drop $\x$ and only concerns $\veceta$, which is a probability simplex for simplicity, which suffices to prove our statement. 
If a margin loss $\phi$ is not classification-calibrated, all $\text{sign}(g_y^\text{*w})$ does not match the Bayes-optimal solution of the cost-sensitive classifier with respect to the rejection cost $c$, which suggests that there exists $\veceta$ that makes at least one $g^\text{*w}_y(\x)$ has the wrong sign compared with $g^*_y(\x)$.

We divide our analysis of $\veceta$ into two cases. 
First, we analyze the case where Chow's rule suggests to predict the input with the most probable class, which is the case where $\max_y \veceta_y > 1-c$.
Second, we analyze the case where Chow's rule suggests to refrain from making a prediction, i.e., $\max_y \veceta_y \leq 1-c$.  
Note that both cases cover all possibilities of $\veceta$. 
Moreover, we note that if $f(\x;g^\text{*w})$ disagrees with Chow's rule in at least one of the cases, it suffices to prove that $g^\text{*w}$ does not achieve calibration in classification with rejection.

\textbf{Case 1:} $\max_y \eta_y > 1-c$

\noindent In this case, Chow's rule suggests to accept and predict the most probable class $\argmax_y \eta_y$.

It is straightforward to see that for a decision rule $f(\x; \vecg)$ to match Chow's rule in this case, the sign of all $g_y$ must match the Bayes optimal classifier of binary cost-sensitive classification $g^*_y$, that is $g_y > 0$ only for $\eta_y > 1-c$ and $g_y' < 0$ for other less probable classes.
Based on $f(\x; \vecg)$, this is the only possible configuration of $\vecg$ to have the same decision as Chow's rule in the case where $\max_y \eta_y > 1-c$, which is predicting the most probable class $\argmax_y \eta_y$.

Thus, if the disagreement between of $\vecg^\text{*w}$ and $\vecg^*$ arises in the case where $\max_y \eta_y > 1-c$, $\vecg^\text{*w}$ must lead either predicting the wrong class (i.e., not the most probable class) or refrain from making a prediction, which both cases disagree with Chow's rule and lead to higher zero-one-$c$ risk.
Thus, it suffices to show that if $\phi$ is not classification-calibrated and the disagreement occurs when $\max_y \eta_y > 1-c$, then $\mathcal{L}_{\text{CS}}^{c,\phi}$ is not calibrated.

\textbf{Case 2:} $\max_y \veceta_y \leq 1-c$

In this case, Chow's rule suggests to refrain from making a prediction.

We will show that if $\vecg^{\text{*w}}$ agrees with Chow's rule for all $\veceta$ that lie in Case $1$, there must exist $\veceta$ in Case $2$ such that the decision of $\vecg^{*w}$ disagrees with Chow's rule if no further restriction such as the number of classes is imposed.

If $f(\x; \vecg^{*w}) = f^*(\x)$ everywhere in Case $1$, then it is guaranteed that $g^{*w}_y < 0$ for $\eta_y < c$, that is, $g^{*w}_y$ must predict negative correctly when $\max_y \eta_y > 1-c$ and only one $g^{*w}_y > 0$ for $\eta_y > 1-c$ for all $y \in \mathcal{Y}$. 
We will make use of these conditions to show that $f(\x; \vecg^{*w})$ will wrongly accept the data while Chow's rule suggests to refrain from making a prediction. 

In this case where $\max_y \veceta_y \leq 1-c$, there must exist $\veceta$ such that at least one $y$, we have $g_y > 0$ although $\eta_y \leq 1-c$ to make the sign of $g^{*w}$ disagrees with $g^*$.
Let $\beta \leq 1-c$ be a value that makes $g_y > 0$. 
There exists $\veceta$ such that $\eta_y = \beta$ for one $y$ and $\eta_{y'} \leq c$ for all other classes. Since $c>0$, it is always possible to make $\sum_{y\in\mathcal{Y}} \eta_y =1$ with a sufficient number of classes. 
Thus, we have $g_y > 0$ where $\eta_y = \beta$ and $g_{y'} < 0$ for all other classes.
This makes the decision of $f(\x; \vecg^{*w})$ to accept and predict the data while Chow's rule suggests to refrain from making a prediction, which means $\mathcal{L}_{\text{CS}}^{c,\phi}$ is not calibrated.

In summary, if the disagreement occurs in Case $1$ , it suffices to say $\mathcal{L}_{\text{CS}}^{c,\phi}$ is not calibrated. If disagreement does not occur in Case $1$, there exists $\veceta$ that makes $f(\x; \vecg^{*w})$ disagrees with Chow's rule. 
This concludes the proof of the converse case of our theorem.

As a result, the surrogate loss $\mathcal{L}^{c,{\phi}}_\mathrm{CS}$ is calibrated for classification with rejection, that is, $f(\x;\vecg^*)=f^*(\x)$ for all $\x \in \mathcal{X}$, if and only if $\phi$ is classification-calibrated. 
This concludes the proof.
\label{proof:calib}
\subsection{Proof of Theorem~\ref{thm:excess}}
\label{proof:excess}
To prove that
\begin{align*}
    R^{\zeroonedloss}(f)-R^
    {\zeroonedloss,*} &\leq  \sum_{i=1}^K \psi_{\phi,1-c}^{-1}(R^{\phi,i}_{1-c}(g_i)-R_{1-c}^{\phi,i,*}),
\end{align*}
we divide the proof into two steps. 
The first step is to prove that
\begin{align}
    R^{\zeroonedloss}(f)-R^
    {\zeroonedloss,*} &\leq  R^{ \mathcal{L}^{c,{\zerooneloss}}_\mathrm{CS}}(\vecg)-R^{ \mathcal{L}^{c,{\zerooneloss}}_\mathrm{CS},*}
    \label{ineq:first_step}
\end{align}
and the second step is to prove that
\begin{align}
R^{ \mathcal{L}^{c,{\zerooneloss}}_\mathrm{CS}}(\vecg)-R^{ \mathcal{L}^{c,{\zerooneloss}}_\mathrm{CS},*}  \leq \sum_{i=1}^K \psi_{\phi,1-c}^{-1}(R^{\phi,i}_{1-c}(g_i)-R_{1-c}^{\phi,i,*}).
\label{ineq:second_step}
\end{align}

\textbf{Proof of Ineq.~\eqref{ineq:first_step}:}

To prove this inequality, it suffices to prove that $R^{\zeroonedloss}(f) \leq R^{ \mathcal{L}^{c,{\zerooneloss}}_\mathrm{CS}}(\vecg)$ and $R^{\zeroonedloss,*}= R^{ \mathcal{L}^{c,{\zerooneloss}}_\mathrm{CS},*}$.

To prove that $R^{\zeroonedloss}(f) \leq R^{ \mathcal{L}^{c,{\zerooneloss}}_\mathrm{CS}}(\vecg)$, it suffices to show that 
\begin{align*}
    \zeroonedloss(f(\x;\vecg), y) \leq \mathcal{L}^{c,{\zerooneloss}}_\mathrm{CS}(\vecg; \x, y)
\end{align*}
holds for any choices of $\x \in \mathcal{X}$ and $y \in \mathcal{Y}$.
Thanks to the discrete nature of both the zero-one-$c$ loss $\zeroonedloss$ and the zero-one loss $\zerooneloss$, case analysis can be applied.

\textbf{Case 1: $\zeroonedloss(f(\x;\vecg), y) = 0$}

In this case, it suggests that $f(\x;\vecg)$ predicts a label that matches a label $y$. 
This is possible only if $g_y > 0$ and $g_{y'} <0$ for $y' \neq y$.
Recall the definition of the cost-sensitive surrogate loss:
\begin{align}
    \label{eq:surr-app}
    \mathcal{L}^{c,\zerooneloss}_\mathrm{CS}(\vecg; \x, y) = c \zerooneloss \big( g_{y}(\x)\big) + (1-c) \sum_{y' \neq y} \zerooneloss \big( -g_{y'}(\x) \big).
\end{align}
It can be seen that $ \mathcal{L}^{c,\zerooneloss}_\mathrm{CS}$ can only be larger or equal to zero, that is , $\mathcal{L}^{c,\zerooneloss}_\mathrm{CS}(\vecg; \x, y) \geq 0$.
Thus, $\zeroonedloss(f(\x;\vecg), y) \leq \mathcal{L}^{c,{\zerooneloss}}_\mathrm{CS}(\vecg; \x, y)$ if $\zeroonedloss(f(\x;\vecg), y) = 0$. 
Nevertheless, we can show that they are in fact equal, i.e., $\mathcal{L}^{c,{\zerooneloss}}_\mathrm{CS}(\vecg; \x, y) = 0$. 
This is because $c\zerooneloss \big( g_{y}(\x)\big) = 0$ and $(1-c)\zerooneloss \big(-g_{y'}(\x)\big) = 0$. 
Thus, $\mathcal{L}^{c,{\zerooneloss}}_\mathrm{CS}(\vecg; \x, y) = 0$ according to Eq.~\eqref{eq:surr-app}.
Therefore, we have
\begin{align*}
    \zeroonedloss(f(\x;\vecg), y) = \mathcal{L}^{c,{\zerooneloss}}_\mathrm{CS}(\vecg; \x, y) = 0.
\end{align*}

\textbf{Case 2: $\zeroonedloss(f(\x;\vecg), y) = 1$}
In this case, it can be shown that
\begin{align*}
    \zeroonedloss(f(\x;\vecg), y) = \mathcal{L}^{c,{\zerooneloss}}_\mathrm{CS}(\vecg; \x, y) = 1.
\end{align*}
It can be seen that $f(\x;\vecg)$ wrongly predicts the label, which only occurs when $g_{y'}(\x) > 0$ where $y' \neq y$ and $g_{y''}(\x) < 0$ for $y'' \neq y'$, which also includes the correct label.
Therefore, $c\zerooneloss \big( g_{y}(\x)\big) = c$,  $(1-c)\zerooneloss \big(-g_{y'}(\x)\big) = 1-c$, and $(1-c)\zerooneloss \big(-g_{y''}(\x)\big) = 0$ in the case where $y'' \neq y$ and $y'' \neq y'$.
As a result, the sum of the penalty becomes $c + (1-c) = 1$, which makes $\zeroonedloss(f(\x;\vecg), y) = \mathcal{L}^{c,{\zerooneloss}}_\mathrm{CS}(\vecg; \x, y) = 1$.

\textbf{Case 3: $\zeroonedloss(f(\x;\vecg), y) = c$}

Unlike the previous two cases, the bound can be loose in the case where $f(\x;\vecg)$ decides to refrain from making a prediction.
$f(\x;\vecg)=c$ indicates that a decision rule decides to reject. This is possible only if $g^*_y(\x) < 0$ for all $y \in \mathcal{Y}$ or Condition~\eqref{cond:conflict} holds.

If $g^*_y(\x) < 0$ for all $y \in \mathcal{Y}$, then $\mathcal{L}^{c,{\zerooneloss}}_\mathrm{CS}(\vecg^*; \x, y) = c$ because  $c\zerooneloss \big( g_{y}(\x)\big) = c$ and $(1-c)\zerooneloss \big(-g_{y'}(\x)\big) = 0$. 

If Condition~\eqref{cond:conflict} is true, we can show that $\mathcal{L}^{c,{\zerooneloss}}_\mathrm{CS}(\vecg^*; \x, y) \geq c$.
We will show by using the fact that the minimum possible value of $\mathcal{L}^{c,{\zerooneloss}}_\mathrm{CS}(\vecg^*; \x, y)$ when having a conflict is $1-c$. 
This is when there exists $g_y(\x) > 0$ and $g_{y'}(\x)> 0$, where $y$ is a correct label and $y'$ is a wrong label.
In this case, $c\zerooneloss \big( g_{y}(\x)\big) = 0$,  $(1-c)\zerooneloss \big( -g_{y'}(\x)\big) = 1-c$ and $(1-c)\zerooneloss \big( -g_{y''}(\x)\big) = 0$ for $y'' \neq y'$ and $y'' \neq y$.
If the conflicts of only two classifiers occur and both give the wrong labels, then we the penalty is $2(1-c)+c$. More conflicts only gain the higher penalty or nothing (if the conflict comes from the correct class), thus $\mathcal{L}^{c,{\zerooneloss}}_\mathrm{CS}(\vecg^*; \x, y) \geq 1-c \geq c$.

Therefore, $R^{\zeroonedloss}(f) \leq R^{ \mathcal{L}^{c,{\zerooneloss}}_\mathrm{CS}}(\vecg)$.

Next, to prove that $R^{ \mathcal{L}^{c,{\zerooneloss}}_\mathrm{CS},*}= R^{ \mathcal{L}^{c,{\zerooneloss}}_\mathrm{CS},*}$, it suffices to show that $\zeroonedloss(f(\x;\vecg^*), y) = \mathcal{L}^{c,{\zerooneloss}}_\mathrm{CS}(\vecg^*; \x, y)$ for any choices of $\x \in \mathcal{X}$ and $y \in \mathcal{Y}$.

\textbf{Case 1: $\zeroonedloss(f(\x;\vecg^*), y) = 0$}

From the previous analysis, in this case we have
\begin{align*}
    \zeroonedloss(f(\x;\vecg), y) = \mathcal{L}^{c,{\zerooneloss}}_\mathrm{CS}(\vecg; \x, y) = 0,
\end{align*}
for any $\vecg$. 
Therefore, it must also hold when $\vecg=\vecg^*$.

\textbf{Case 2: $\zeroonedloss(f(\x;\vecg^*), y) = 1$}

From the previous analysis, in this case we have
\begin{align*}
    \zeroonedloss(f(\x;\vecg), y) = \mathcal{L}^{c,{\zerooneloss}}_\mathrm{CS}(\vecg; \x, y) = 1,
\end{align*}
for any $\vecg$. 
Therefore, it must also hold when $\vecg=\vecg^*$.

\textbf{Case 3: $\zeroonedloss(f(\x;\vecg^*), y) = c$}
As suggested in the proof of Theorem~\ref{thm:calib} that Condition~\eqref{cond:conflict} is impossible to occur for $f(\x;\vecg^*)$. 
Thus, if $\zeroonedloss(f(\x;\vecg^*)$ rejects, it means that $g^*_y(\x) < 0$ for all $y \in \mathcal{Y}$. 
This makes $\mathcal{L}^{c,{\zerooneloss}}_\mathrm{CS}(\vecg^*; \x, y) = c$ because  $c\zerooneloss \big( g_{y}(\x)\big) = c$ and $(1-c)\zerooneloss \big(-g_{y'}(\x)\big) = 0$.

Since $\zeroonedloss(f(\x;\vecg^*), y) = \mathcal{L}^{c,{\zerooneloss}}_\mathrm{CS}(\vecg^*; \x, y)$ always holds, $R^{ \zeroonedloss,*}= R^{ \mathcal{L}^{c,{\zerooneloss}}_\mathrm{CS},*}$.

We have proven that $R^{\zeroonedloss}(f) \leq R^{ \mathcal{L}^{c,{\zerooneloss}}_\mathrm{CS}}(\vecg)$ and $R^{ \zeroonedloss,*}= R^{ \mathcal{L}^{c,{\zerooneloss}}_\mathrm{CS},*}$. 
Thus, Ineq.~\eqref{ineq:first_step} holds.

\textbf{Proof of Ineq.~\eqref{ineq:second_step}:}

In this part, the proof no longer involves with classification with rejection but the well-studied cost-sensitive classification.
We borrow the existing result by~\citet{scott2012calibrated} to prove this part.

Recall the following pointwise conditional risk:
\begin{align*}
    W_{\mathcal{L}^{c,{\phi}}_\mathrm{CS}} \big(\vecg(\x); \veceta(\x))) 
    &= \sum_{y \in \mathcal{Y}} \eta_y(\x) \left[ c \phi \big( g_{y}(\x)\big) + (1-c) \sum_{y' \neq y} \phi \big( -g_{y'}(\x) \big) \right]. \\
    &= \sum_{y \in \mathcal{Y}} \left[ \eta_y(\x) c \phi \big( g_{y}(\x)\big) + (1-\eta_y(\x)) (1-c) \phi \big( -g_{y}(\x) \big) \right]
\end{align*}
also holds when $\phi=\zerooneloss$. 
This suggests that the risk of the cost-sensitive surrogate equals to the sum of the pointwise surrogate risks of $K$ cost-sensitive binary classification problem, i.e., $R^{ \mathcal{L}^{c,{\zerooneloss}}_\mathrm{CS}}(\vecg) = \sum_{i=1}^K R^{\zerooneloss,i}_{1-c}(g_i)$ for any $\vecg$. 
Thus, we have
\begin{align*}
    R^{ \mathcal{L}^{c,{\zerooneloss}}_\mathrm{CS}}(\vecg)-R^{ \mathcal{L}^{c,{\zerooneloss}}_\mathrm{CS},*}  = \sum_{i=1}^K R^{\zerooneloss,i}_{1-c}(g_i)-R_{1-c}^{\zerooneloss,i,*}.
\end{align*}
Next, since $\phi$ is classification-calibrated, \citet{scott2012calibrated} proved that there exists $\psi_{\phi,1-c}\colon \R\to\R$, which is a non-decreasing invertible function and $\psi_{\phi,1-c}(0)=0$ such that
\begin{align*}
    R^{\zerooneloss,i}_{1-c}(g_i)-R_{1-c}^{\zerooneloss,i,*} \leq \psi_{\phi,1-c}^{-1}(R^{\phi,i}_{1-c}(g_i)-R_{1-c}^{\phi,i,*})
\end{align*}
By adding the excess risk bound of $K$ cost-sensitive binary classification problems, we have
\begin{align*}
    R^{\zeroonedloss}(f)-R^
    {\zeroonedloss,*}
    &\leq \sum_{i=1}^K \psi_{\phi,1-c}^{-1}(R^{\phi,i}_{1-c}(g_i)-R_{1-c}^{\phi,i,*}).
\end{align*}
Thus, Ineq.~\eqref{ineq:second_step} holds.

Since Ineq.~\eqref{ineq:first_step} and Ineq.~\eqref{ineq:second_step} hold, we concludes the proof of the excess risk bound of classification with rejection based on cost-sensitive classification for a general classification-calibrated loss.
\clearpage
\section{Experiment Details
\label{sec:exp-detail}
}
In this section, we provide more information on datasets and implementation details.
\subsection{Datasets}
We used the train and test data for MNIST, Fashion-MNIST, and KMNIST for training and testing, respectively.
For MNIST, Fashion-MNIST, and KMNIST, we used the same test data as the one provided for testing for those datasets.
For hyperparameter selection, we split ten percent of the training data to use as validation data (i.e., $6000$). For the other ninety percent of training data, they were used for training all methods. 
Note that only ANGLE and SCE used validation data. 

For the datasets other than MNIST, Fashion-MNIST, and KMNIST, we randomly used half of the dataset for training all methods.
For clean-labeled and noisy-labeled classification, we used ten percent of all data for validation and the other fourty percent were used for testing. 
For PU-classification, we used twenty percent of all data for validation and the other thirty percent were used for testing.  
Again, note that only ANGLE and SCE used validation data. 

Table~\ref{table:dataset} shows the specification of the benchmark datasets used in this paper. 
Subj was preprocessed by using $100$-dimensional GloVe mean word embedding~\citep{pennington2014glove}.

\begin{table}[t]
  \begin{center}
	\caption{Specification of benchmark datasets: the number of features, the number of classes, the number of data.} \label{table:dataset}
	\vspace{0.1in}
	\begin{tabular}{|c|c|c|c|} \hline
Name     & $\#$features & $\#$classes & $\#$data \\ \hline
Gisette  & 5000           & 2           & 7000          \\
Phishing & 30           & 2           & 11050        \\
Spambase  & 57           & 2           & 4601        \\ 
Subj   & 100           & 2          & 10000       \\ 
Twonorm   & 20           & 2          & 7400    \\
Gas-Drift   & 128           & 6          & 13910         \\ 
HAR   & 561           & 6          & 10299        \\ 
MNIST   & 28$\times$28           & 10          & 70000        \\ 
Fashion-MNIST   & 28$\times$28             & 10          & 70000        \\ 
KMNIST   & 28$\times$28             & 10          & 70000      \\ 
	  \hline
	\end{tabular}
  \end{center}
\end{table}
\subsection{Implementation details}
We used a linear-in-input model for all binary classification datasets. 
For MNIST, Fashion-MNIST, KMNIST, we used the same convolutional neural network (CNN) architecture.
The CNN model consists of a sequence of two convolutional layers with $32$ channels and two convolutional layers with $64$ channels, followed by a max pooling layer and two linear layers with dimension $128$. 
The kernel size of convolutional layers is $3$, and the kernel size of max pooling layer is $2$.
Dropout with probability $0.5$ is used between two linear layers.
We used rectifier linear units (ReLU) \cite{nair2010rectified} as the non-linear activation function.
For HAR and Gas-Drift, we used the one hidden layer multilayer perceptron as a model ($d-64-1$). 
We also applied batch normalization~\citep{ioffe2015batch} at the final layer to stabilize training. 
The objective functions were optimized using Adam~\citep{adam}. 
The experiment code for implementing a model was written using PyTorch~\citep{pytorch2019}.
We ran 10 trials for each experiment. 

\subsubsection{Clean-labeled and noisy-labeled classification}
\paragraph{Data generation process}
For noise labels, the noise rate was $0.25$, i.e., $25\%$ of labels are randomly flipped into other classes.
No data augmentation was used for all experiments.

\paragraph{Hyperparameters}
For all experiments, learning rate was set to $0.001$, batch size was $256$.
The model was trained for $10$ epochs for the convolutional neural networks and $100$ epochs for both the linear-in-parameter model and multilayer perceptron.

\subsubsection{PU-classification}
\paragraph{Problem setting}
PU-classification considers situations when only positive and unlabeled data are available.
We denote class conditional densities by $p_+(\x)=p(\x|y=+1)$ and $p_-(\x)=p(\x|y=-1)$ and the class prior probability by $\pi=p(y=+1)$.
Then the distribution for unlabeled data can be expressed as 
\begin{align*}
    p(\x)=\pi p_+(\x) + (1-\pi)p_-(\x).
\end{align*}

Let $\mathcal{L}(\vecg;\x,y)$ be a loss function.
It is known that the expected classification risk can be expressed as~\citep{du2015convex}:
\begin{align*}
      \EJoint[\mathcal{L}(\vecg;\x,y)] =  \pi\mathbb{E}_{\x\sim p_+(\x)}\left[\mathcal{L}(\vecg;\x,+1)\right] -\pi\mathbb{E}_{\x\sim p_+(\x)}\left[\mathcal{L}(\vecg;\x,-1)\right]+\mathbb{E}_{\x\sim p(\x)}\left[\mathcal{L}(\vecg;\x,-1)\right].
\end{align*}

Then, the unbiased risk estimator given positive examples $\{\x^\mathrm{p}_i\}_{i=1}^{n_\mathrm{p}} \stackrel{\mathrm{i.i.d.}}{\sim} p_+(\x)$ and unlabeled examples $\{\x^\mathrm{u}_j\}_{j=1}^{n_\mathrm{u}} \stackrel{\mathrm{i.i.d.}}{\sim} p(\x)$ can be expressed as
\begin{align}
    \label{eq:unbiased}
      \frac{\pi}{n_\mathrm{p}}\sum_{i=1}^{n_\mathrm{p}}\left[\mathcal{L}(\vecg;\x,+1)\right] -\frac{\pi}{n_\mathrm{p}}\sum_{i=1}^{n_\mathrm{p}}\left[\mathcal{L}(\vecg;\x,-1)\right]+\frac{1}{n_\mathrm{u}}\sum_{j=1}^{n_\mathrm{u}}\left[\mathcal{L}(\vecg;\x,-1)\right].
\end{align}
However, \citet{kiryo2017} suggested that Eq.~\eqref{eq:unbiased} is prone to overfitting because $\vecg$ may treat all unlabeled data as negative to minimize the empirical risk.
As a result, it was suggested to instead minimize the following empirical risk:
\begin{align}
\label{eq:biased}
 \hat{R}_{\mathrm{PU}}^\mathcal{L}(\vecg)= \frac{\pi}{n_\mathrm{p}}\sum_{i=1}^{n_\mathrm{p}}\left[\mathcal{L}(\vecg;\x,+1)\right] +\max\left(0,   \frac{1}{n_\mathrm{u}}\sum_{j=1}^{n_\mathrm{u}}\left[\mathcal{L}(\vecg;\x,-1)\right] - \frac{\pi}{n_\mathrm{p}}\sum_{i=1}^{n_\mathrm{p}}\left[\mathcal{L}(\vecg;\x,-1)\right]\right),
\end{align}
which is known to be a biased but still consistent estimator.
With Eq.~\eqref{eq:biased}, the choice of loss functions is flexible and we can easily apply any loss function to learn a classifier with rejection, e.g., our cost-sensitive approach can be easily applied in PU-classification by minimizing $\hat{R}_{\mathrm{PU}}^{\mathcal{L}_{\mathrm{CS}}^{c,\phi}}(\vecg)$, which equals to solving two cost-sensitive PU-classification.
We refer the readers to~\citet{du2014,du2015convex,kiryo2017} for more detail about PU-classification and~\citet{charoenphakdee2019positive} for more detail about cost-sensitive PU-classification.

\paragraph{Data generation process}
PU-classification needs two set of data: positive data and unlabeled data. We first decided the size of unlabeled data to be around the size of original training data, but truncated to be divisible by $200$.
Then, the size of positive data was set to be $\frac15$ of the size of unlabeled data.

After deciding the size of two sets, we than sampled from the original training data.
The positive data for PU is sampled from the original positive training data without replacement.
Then, from the left positive training data and negative data, we sampled without replacement for unlabeled data according to the value of class prior.

\paragraph{Hyperparameters}
The class prior for the positive class is set to be $0.7$ throughout all PU-classification experiments.
Learning rate was set to $0.001$, batch size was 64, and the number of epochs was $100$.

\section{Additional Experiment Results
\label{sec:add-exp}
}
In this section, we report full experimental results in a table format with varying rejection costs for clean-labeled classification, noisy-labeled classification, and PU-classification, respectively. 
We also provide more discussion on our proposed rejection conditions, i.e., Conds.~\eqref{eq:amb-reject} and \eqref{eq:morethanone-reject}.

\subsection{Full experimental results in a table format}
We report the mean and standard error over ten trials of the test empirical zero-one-$c$ risk, rejection ratio, and test error on the non-rejected data.
\subsubsection*{Table index:}
\begin{itemize}
    \item Tables~\ref{app:clean-bin-01c}, \ref{app:clean-bin-err} and \ref{app:clean-bin-rej}: Test empirical $0\text{-}1\text{-}c$ risk, classification error on accepted data, and rejection ratio for clean-labeled binary classification with rejection, respectively.
    \item Tables~\ref{app:noisy-bin-01c}, \ref{app:noisy-bin-err} and \ref{app:noisy-bin-rej}: Test empirical $0\text{-}1\text{-}c$ risk, classification error on accepted data, and rejection ratio for noisy-labeled binary classification with rejection, respectively.
    \item Tables~\ref{app:pu-bin-01c}, \ref{app:pu-bin-err} and \ref{app:pu-bin-rej}: Test empirical $0\text{-}1\text{-}c$ risk, classification error on accepted data, and rejection ratio for positive-unlabeled binary classification with rejection, respectively.
    \item Tables~\ref{app:clean-mult-01c}, \ref{app:clean-mult-err} and \ref{app:clean-mult-rej}: Test empirical $0\text{-}1\text{-}c$ risk, classification error on accepted data, and rejection ratio for clean-labeled multiclass classification with rejection, respectively.
    \item Tables~\ref{app:noisy-mult-01c}, \ref{app:noisy-mult-err} and \ref{app:noisy-mult-rej}: Test empirical $0\text{-}1\text{-}c$ risk, classification error on accepted data, and rejection ratio for noisy-labeled multiclass classification with rejection, respectively.
\end{itemize}

\begin{table*}[t]
    \centering
    \caption{Mean and standard error of $0\text{-}1\text{-}c$ risk of the clean-labeled binary classification setting (rescaled to $0\text{-}100$).}
    \label{app:clean-bin-01c}

\end{table*}

\FloatBarrier
\subsection{More discussion on the proposed rejection conditions}
In this section, we provide more discussion on our proposed rejection conditions. 
We provide additional experimental results without using Cond.~\eqref{eq:morethanone-reject}. 
We found that using Cond.~\eqref{eq:morethanone-reject} slightly affects the performance of our cost-sensitive approach. 
This is because there is only a small portion of data that satisfies Cond.~\eqref{eq:morethanone-reject}. 
Note that if we succeed to obtain the optimal classifier $\vecg^*$, this condition is \emph{impossible} to be satisfied.
Recall that in Section~\ref{sec:sub-proposed-multi}, for $\vecg^*$, at most one $g^*_y(\x)$ can be more than zero since it implies $\eta_y > 1-c > 0.5$. 
Nevertheless, Cond.~\eqref{eq:morethanone-reject} may hold in practice due to empirical estimation.

Figures~\ref{fig:app-bin} and \ref{fig:app-mult} illustrate the performance of the proposed approach that uses only Cond.~\eqref{eq:amb-reject} as a rejection condition and other baselines in the binary and multiclass settings, respectively.
Figures~\ref{fig:mnist-rej}, \ref{fig:fmnist-rej}, and \ref{fig:kmnist-rej} illustrate rejected test data based on different conditions for MNIST, Fashion-MNIST, and KMNIST, respectively. 

\begin{figure}
\centering
\includegraphics[width=\linewidth]{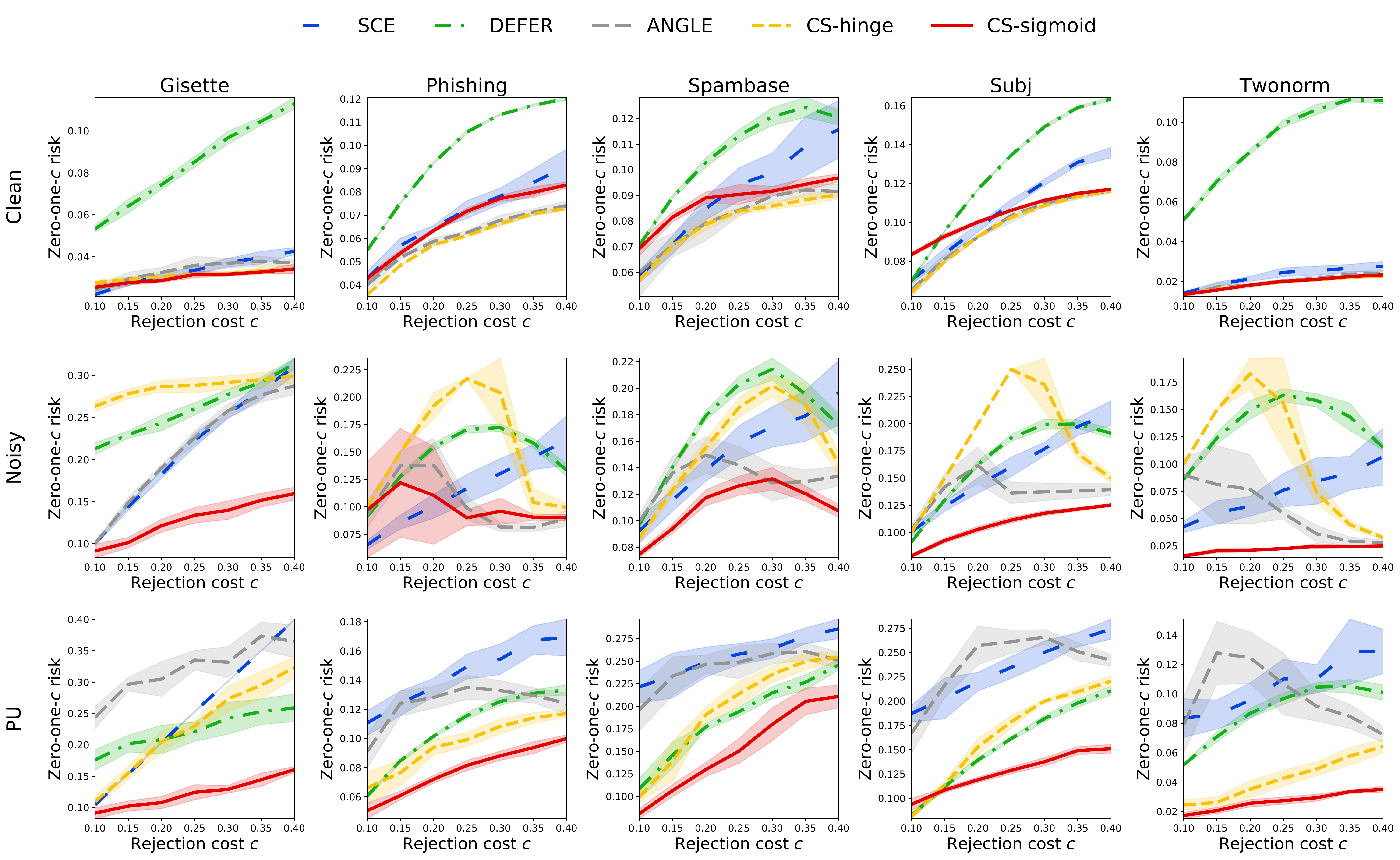}
\caption{Mean and standard error of the test empirical zero-one-$c$ risk over ten trials with varying rejection costs (Binary classification).
Our approach in this figure only used Cond.~\eqref{eq:amb-reject} as a rejection condition.
Each column indicates the performance with respect to one dataset.
(Top) clean-labeled classification with rejection. 
(Middle) noisy-labeled classification with rejection. 
(Bottom) PU-classification with rejection. It can be seen that a similar trend as Figure~\ref{fig:bin-01c} can be observed.}
\label{fig:app-bin}
\end{figure}

\begin{figure}
\centering
\includegraphics[width=\linewidth]{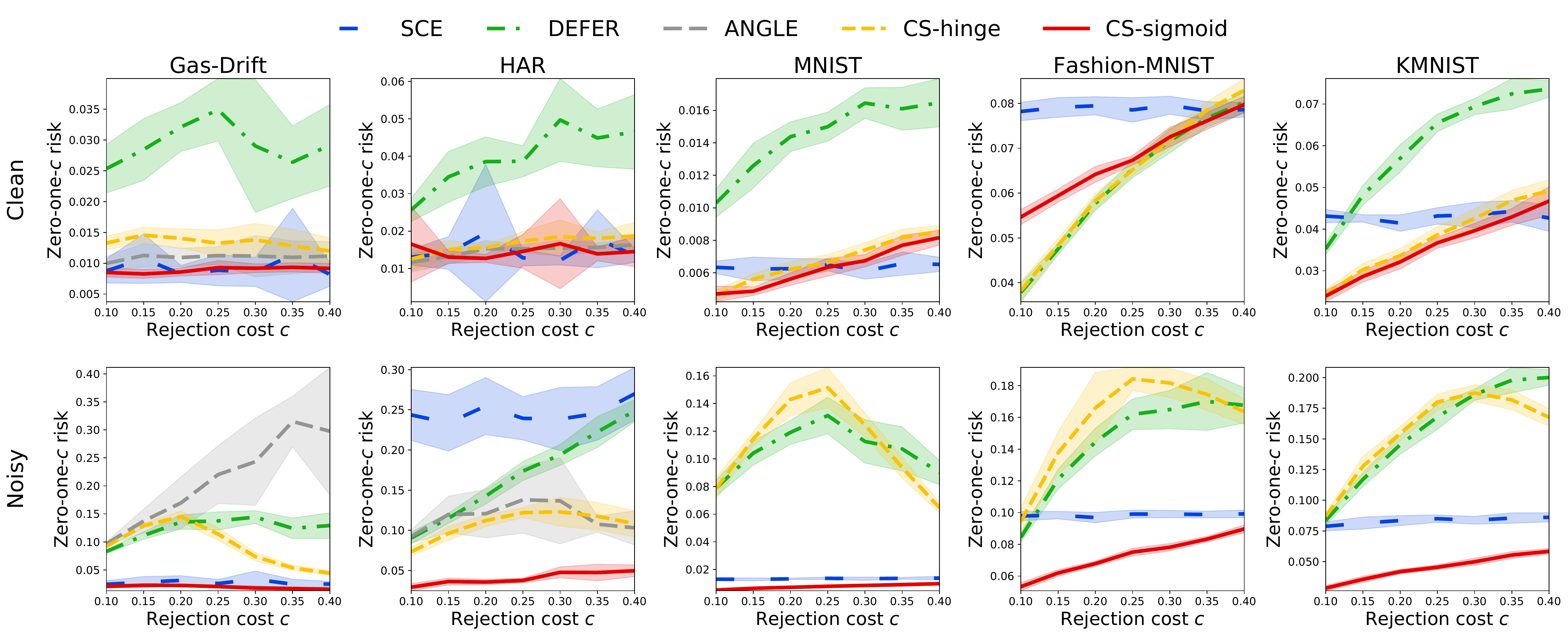}
\caption{
Mean and standard error of the test empirical zero-one-$c$ risk over ten trials with varying rejection cost (Multiclass classification). 
Our approach in this figure only used Cond.~\eqref{eq:amb-reject} as a rejection condition.
Each column indicates the performance with respect to one dataset.
(Top) clean-labeled classification with rejection. 
(Bottom) noisy-labeled classification with rejection.
For MNIST, Fashion-MNIST, and KMNIST, we found that ANGLE failed miserably and has  zero-one-$c$ risk more than $0.5$ and thus it is excluded from the figure for readability. It can be seen that a similar trend as Figure~\ref{fig:mul} can be observed.
}
\label{fig:app-mult}
\end{figure}

\clearpage
\begin{figure}
\centering
\includegraphics[width=.45\linewidth]
{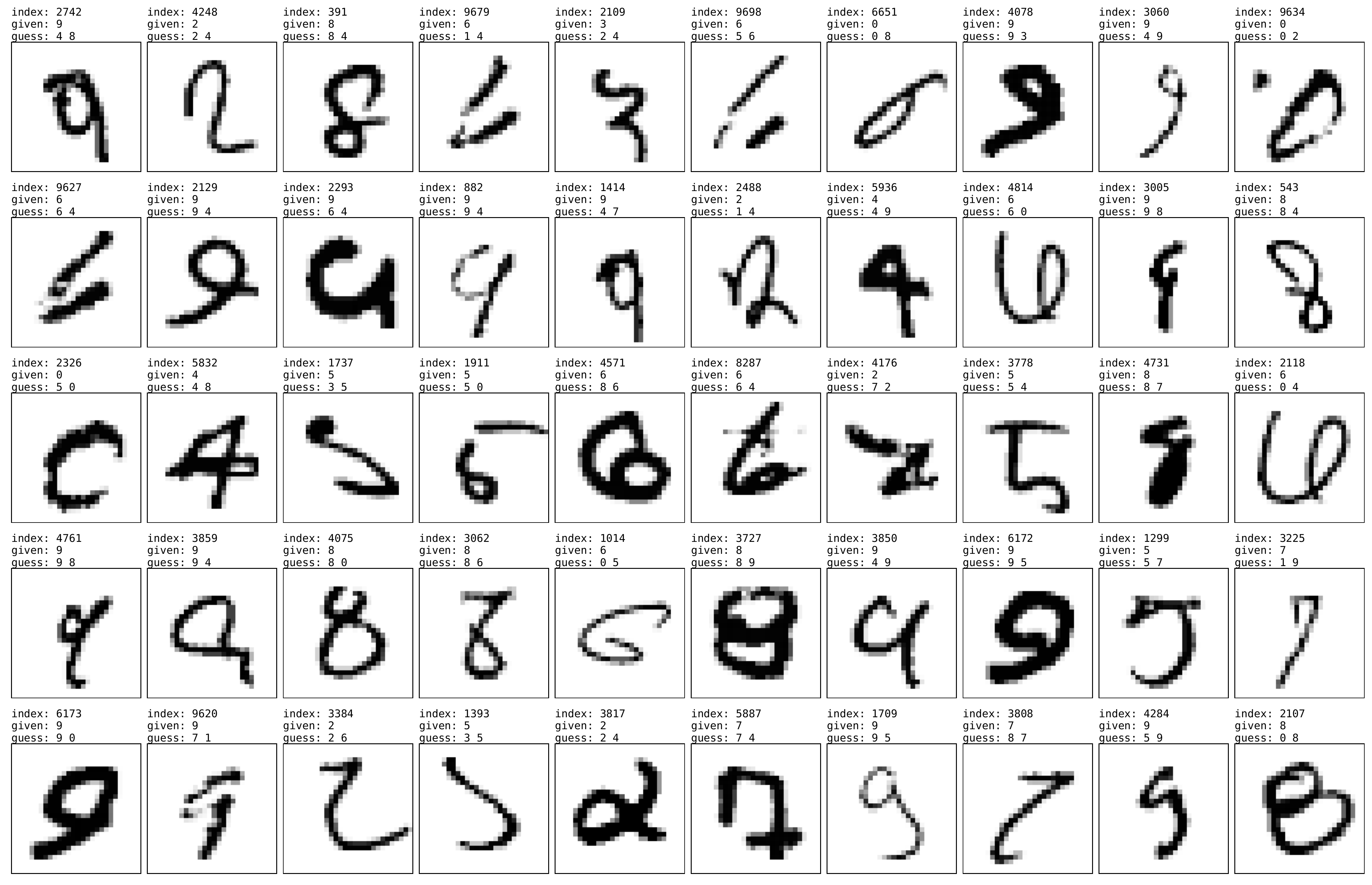}
\quad
\includegraphics[width=.45\linewidth]
{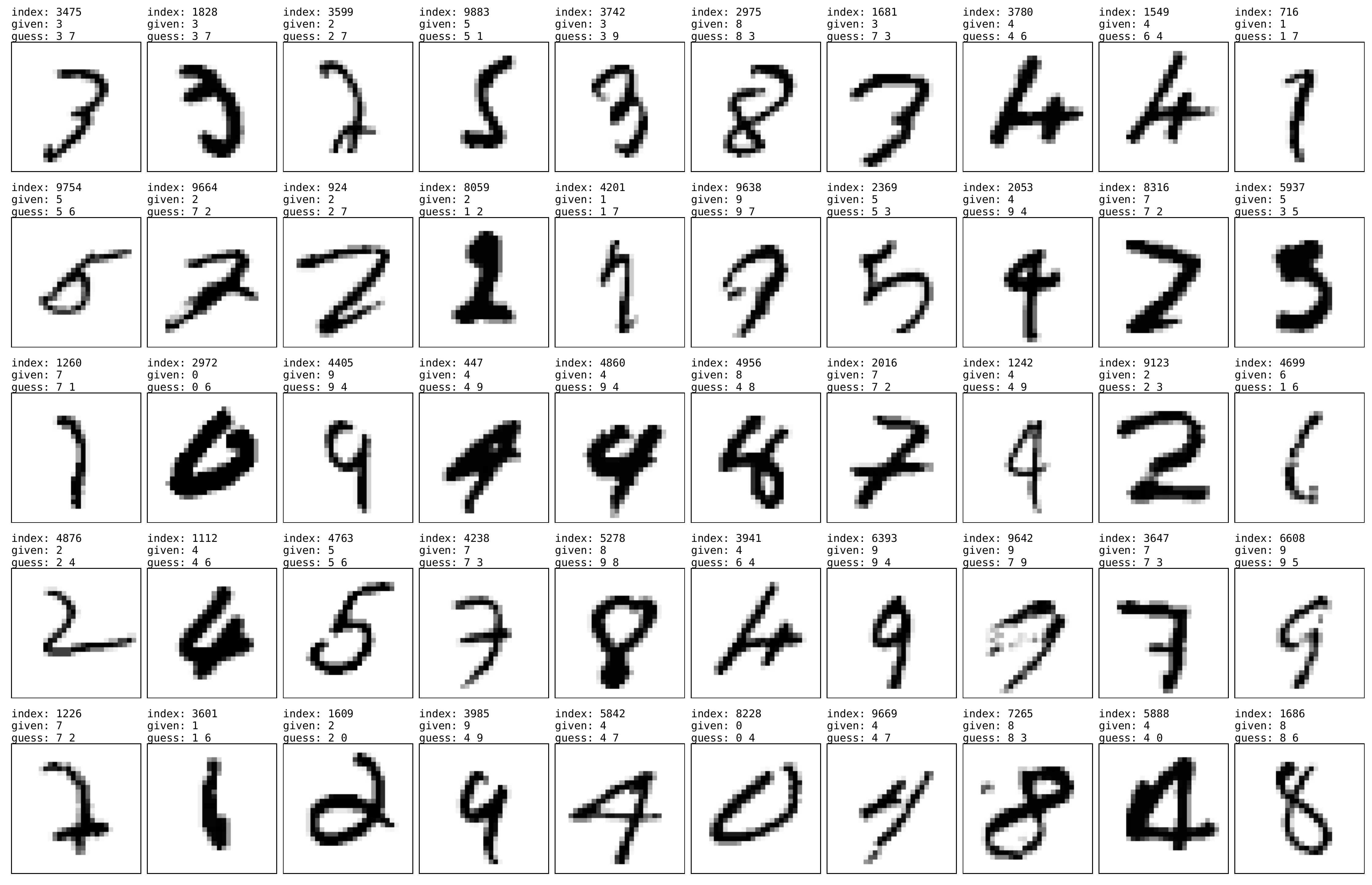}
\caption{%
Examples of rejected test data in MNIST. 
(Left) rejections that were made by Cond.~\eqref{eq:amb-reject} (distance rejection). 
(Right) rejections that were made by Cond.~\eqref{eq:morethanone-reject} (ambiguity rejection). 
The data rejected by Cond.~\eqref{eq:amb-reject} appeared to look more chaotic than the one rejected by Cond.~\eqref{eq:morethanone-reject}. 
On the right figure, several images can be seen to be able to associated to more than one classes, e.g., $1$ vs $7$, $4$ vs $9$, and $3$ vs $5$.
}
\label{fig:mnist-rej}
\end{figure}

\begin{figure}
\centering
\includegraphics[width=.45\linewidth]
{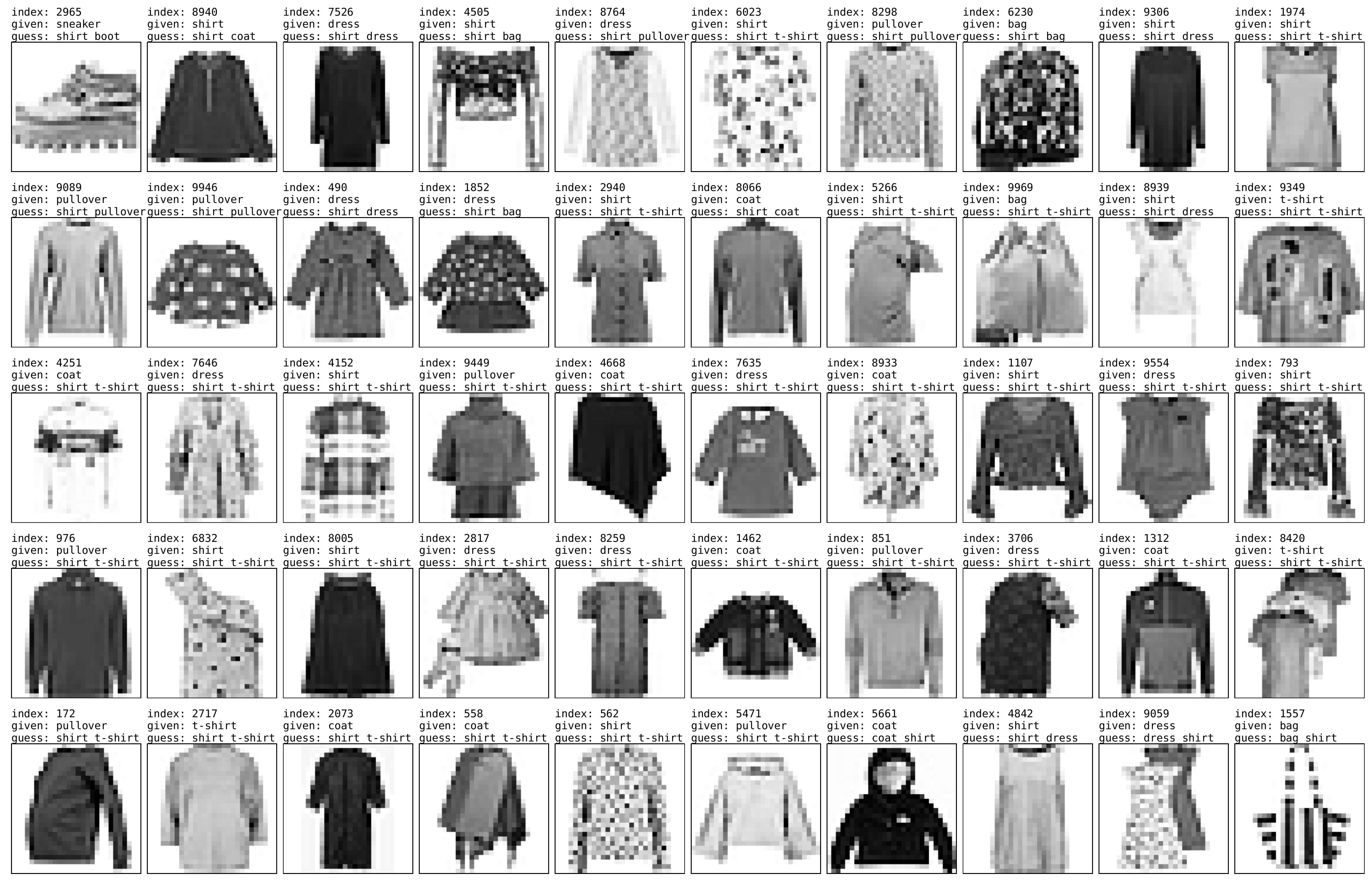}
\quad
\includegraphics[width=.45\linewidth]
{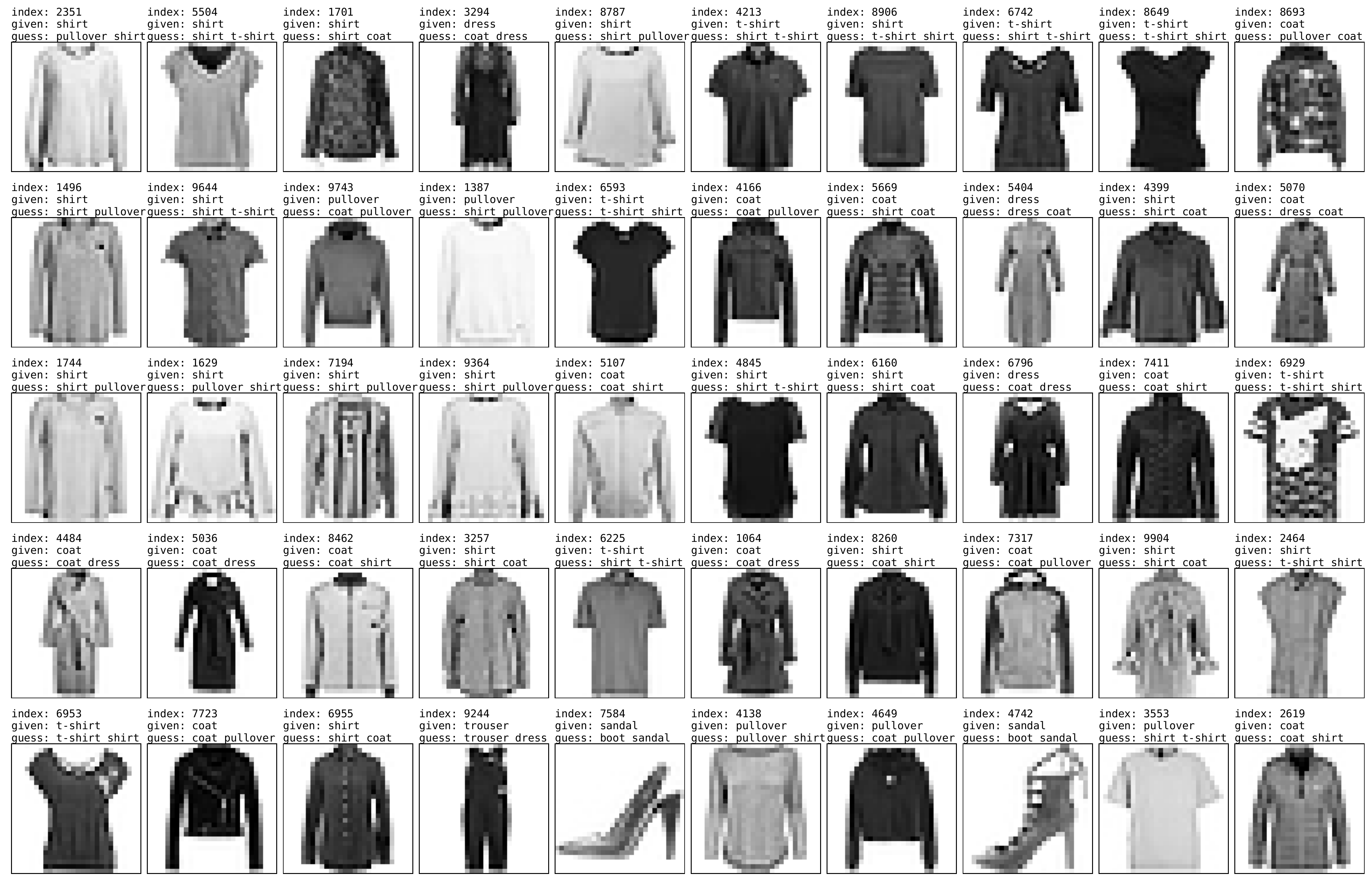}
\caption{%
Examples of rejected test data in Fashion-MNIST. 
(Left) rejections that were made by Cond.~\eqref{eq:amb-reject} (distance rejection). 
(Right) rejections that were made by Cond.~\eqref{eq:morethanone-reject} (ambiguity rejection).
The data rejected by Cond.~\eqref{eq:amb-reject} appeared to have more texture information than the one rejected by Cond.~\eqref{eq:morethanone-reject}.
}
\label{fig:fmnist-rej}
\end{figure}

\begin{figure}
\centering
\includegraphics[width=.45\linewidth]
{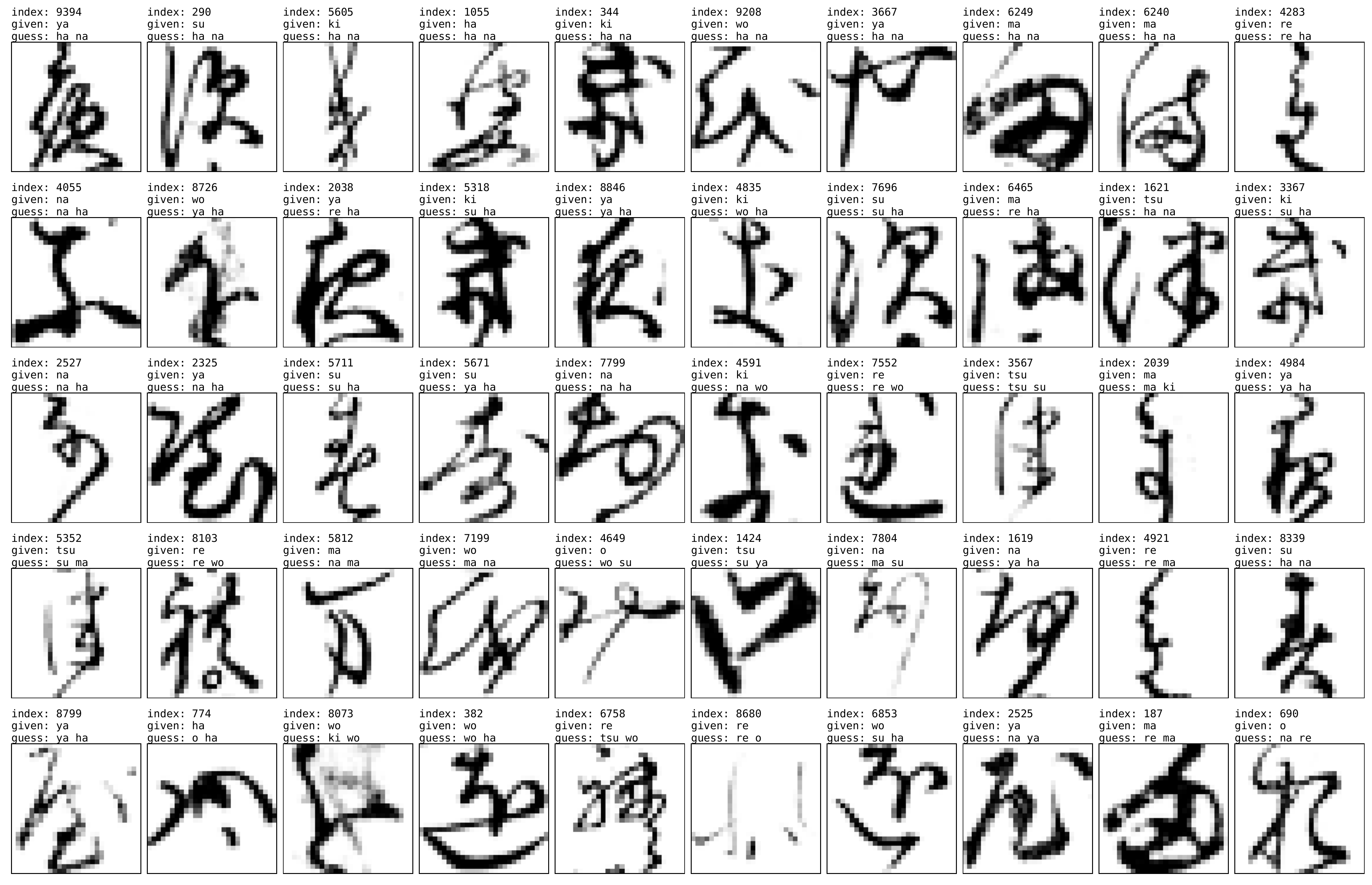}
\quad
\includegraphics[width=.45\linewidth]
{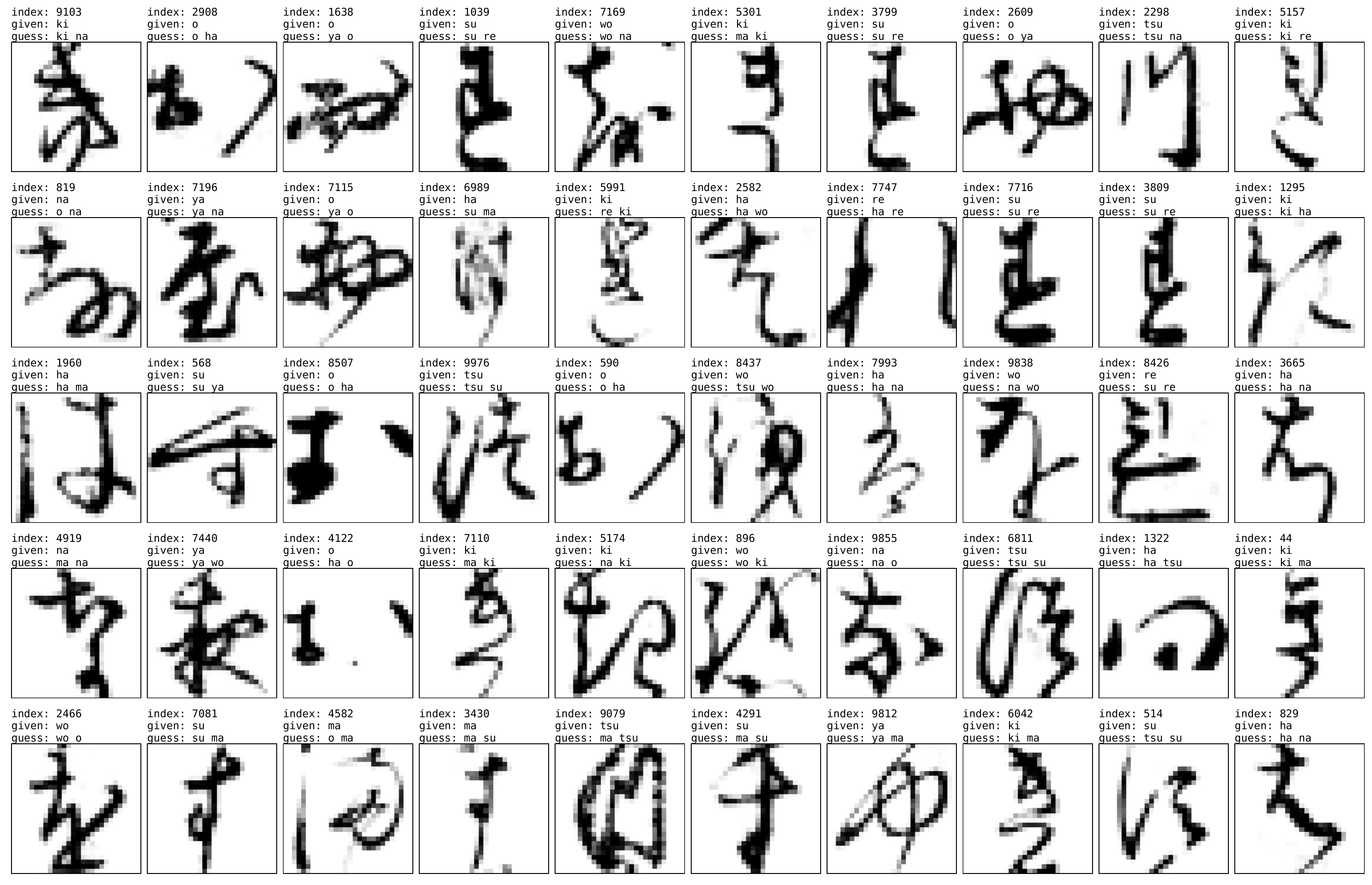}
\caption{%
Examples of rejected test data in KMNIST.
(Left) rejections that were made by Cond.~\eqref{eq:amb-reject} (distance rejection). 
(Right) rejections that were made by Cond.~\eqref{eq:morethanone-reject} (ambiguity rejection). 
The data rejected by Cond.~\eqref{eq:amb-reject} appeared to look more chaotic than the one rejected by Cond.~\eqref{eq:morethanone-reject}.
}
\label{fig:kmnist-rej}
\end{figure}

\end{document}